\documentclass{article}

 \usepackage[preprint, nonatbib]{neurips_2026}

\usepackage[utf8]{inputenc} 
\usepackage[T1]{fontenc}    
\usepackage{hyperref}       
\usepackage{url}            
\usepackage{booktabs}       
\usepackage{amsfonts}       
\usepackage{nicefrac}       
\usepackage{microtype}      
\usepackage{xcolor}         

\usepackage{todonotes}
\usepackage{amsmath, amsthm, amsxtra, amssymb, amstext, mathtools}
\usepackage{thmtools} 
\usepackage{xspace}
\usepackage{enumitem}
\usepackage[algo2e,ruled,vlined,linesnumbered]{algorithm2e}\SetArgSty{upshape}
\usepackage{siunitx}

\usepackage{subcaption}
\usepackage{thm-restate}
\usepackage[nameinlink]{cleveref}

\usepackage[shortcuts]{extdash} 

\newcommand{\mylineref}[1]{\hyperref[#1]{line~\ref*{#1}}}

\newcommand{\NSGA}{\mbox{NSGA}\nobreakdash-II\xspace}
\newcommand{\moead}{MOEA/D\xspace}
\newcommand{\selfAdjustingMoead}{sMOEA/D\xspace}

\newcommand{\oneminmax}{\textsc{OneMinMax}\xspace}

\newcommand{\lotz}{\textsc{LOTZ}\xspace}
\newcommand{\monotoneLotz}{\textsc{LOTZ}\textrm{$_{u, v}$}\xspace}

\newcommand{\zeroOneKnapsack}{$0$/$1$-\textsc{Knapsack}\xspace}

\newcommand{\R}{\ensuremath{\mathbb{R}}}

\newcommand{\N}{\ensuremath{\mathbb{N}}} 

\let\originalleft\left
\let\originalright\right
\renewcommand{\left}{\mathopen{}\mathclose\bgroup\originalleft}
\renewcommand{\right}{\aftergroup\egroup\originalright}

\makeatletter
\@ifundefined{newcounteralias}{}{%
  \renewcommand\thmt@autorefsetup{%
    \@xa\def\csname\thmt@envname autorefname\@xa\endcsname
      \@xa{\thmt@thmname}%
  }%
}
\makeatother

\newtheorem{theorem}{Theorem}
\newtheorem{lemma}[theorem]{Lemma}

\title{A Dynamic Decomposition Strategy for the MOEA/D With Proven Performance Guarantees}
\author{
  Benjamin Doerr$^1$, Martin~S. Krejca$^1$, Noé Weeks$^2$\\
  $^1$ Laboratoire d'Informatique (LIX), CNRS, École Polytechnique,\\ Institut Polytechnique de Paris, Palaiseau, France\\
  $^2$ École Normale Supérieure, Paris, France\\
  \texttt{firstname.lastname@polytechnique.edu}, \texttt{noe.weeks@ens.psl.eu}
}

\begin{document}

\maketitle

\begin{abstract}
  The MOEA/D is one of the most successful multi-objective evolutionary algorithms. It operates by decomposing the optimization problem into single-objective subproblems, which are then solved in a co-evolutionary manner. The performance of the MOEA/D depends critically on this decomposition.
  In this work, we propose a novel way to adjust this decomposition on the fly, for the case of discrete bi-objective optimization. Different from all previous works in this direction, we support our new algorithm with a proven performance guarantee.
  This is the first work that rigorously proves the runtime of a dynamic multi-objective evolutionary algorithm.
  For monotonic distortions of the simple LOTZ benchmark, it is known that classic decompositions leave large holes in the Pareto front, making it very hard for the MOEA/D to cover the entire Pareto front. In contrast, we prove that our self-adjusting MOEA/D solves all these LOTZ problems in an expected number of $O(n^4)$ function evaluations. This relatively low runtime as well as the details of our mathematical proof show that the dynamic decomposition strategy indeed evolves a decomposition adjusted to each of these problem instances.
\end{abstract}

\section{Introduction}
\label{sec:introduction}

Many real-world optimization problems feature several conflicting objectives. Such \emph{multi-objective optimization problems} (MOOPs) usually do not possess a single optimal solution. Thus, a common approach is to compute a diverse set of \emph{Pareto optima} (incomparable optimal trade-offs) and let a human decision maker pick one of these, based on their broader understanding of the problem.

Due to their population-based nature, genetic algorithms are one of the most prominent approaches for computing the Pareto front of a MOOP. The famous \NSGA algorithm~\cite{DebPAM02}, for example, is massively used in practice and has reached over $50\,000$ citations on Google Scholar. We refer to~\cite{ZhouQLZSZ11} for a broader introduction to the use of genetic algorithms in MOO.

There are two general approaches to the design of multi-objective evolutionary algorithms (MOEAs). The classic one works with a population of individuals and employs diversity mechanisms to ensure that this population approaches the full Pareto front. This concept is found in the algorithms SPEA2~\cite{ZitzlerLT01}, \NSGA~\cite{DebPAM02}, SMS-EMOA~\cite{BeumeNE07}, and NSGA-III~\cite{DebJ14}.

More recently, decomposition-based approaches were proposed. Here the idea is to decompose the MOOP into scalar subproblems and solve these in a co-evolutionary manner, that is, they are optimized in parallel and exchange solutions. The hope is that the co-evolutionary approach helps solve similar subproblems efficiently, but also that the evolutionary process finds Pareto optima that are not the optimum of one of the designated subproblems. We refer to a recent survey~\cite{XuXM20} on decomposition-based MOEAs and note that the by far most prominent such algorithm is the MOEA/D developed by Zhang and Li~\cite{ZhangL07}.
The survey reports that the MOEA/D with a static decomposition oftentimes leads to a worse spread of the final solutions in terms of objective values when compared to a dynamic choice.
This behavior shows repeatedly for continuous and for discrete combinatorial problems, and in the presence of crossover, oftentimes reported for the common Chebyshev decomposition~\cite{HuaJH19,IshibuchiSMN17,LiL11,LiY20,QiMLJSW14}.

Algorithms using one joint population are relatively well understood. In particular, many mathematical analyses were conducted, first for the simplistic (G)SEMO family of algorithms~(e.g.,~\cite{DoN21,Giel03,LaumannsTZ04,QianYZ15ijcai,Thierens03}), then for practically more relevant \NSGA~\cite{CerfDHKW23,ZhengLD22}, {NSGA-III}~\cite{OprisDNS24,WiethegerD23}, SMS-EMOA~\cite{BianZLQ23,ZhengD24}, and SPEA2~\cite{RenBLQ24}.

In contrast, the mathematical analysis of decomposition-based algorithms such as the MOEA/D is much less developed. Refraining from details, this area started only in 2016 with the seminal paper~\cite{LiZZZ16}, then seeing six follow-ups~\cite{DoerrKW24,DoNNS23,HuangZ20,HuangZCHLX21,HuangZCH19,HuangZLL21}.

With relatively few \moead theory works, it comes as no surprise that these works often take simplistic assumptions.
Most results assume that all Pareto optima appear also as optimum of one of the subproblems of the decomposition~\cite{HuangZCHLX21,HuangZLL21,LiZZZ16}; others employ non-standard variation operators~\cite{HuangZ20} or operators that are particularly suited for the problem at hand~\cite{HuangZCH19}.

Without such optimistic assumptions, the performance of the \moead is much less understood. Prominently, Li et~al.~\cite{LiZZZ16} showed that for certain monotonic rescalings of the common LOTZ benchmark, the common uniform decomposition results in subproblems whose optima cover very badly the Pareto front.
The authors argue that the missing Pareto optima are hard to find.
In addition, more recently, Doerr, Krejca, and Week~\cite{DoerrKW24} showed rigorously that the \moead can perform poorly on the simple \oneminmax benchmark if the number of subproblems is ill-suited to the Pareto front size and if a common mutation operator is used.

That a bad fit of the decomposition to the shape of the Pareto front can lead to drastic performance problems has been observed in several empirical works before, and several dynamic ways to adjust the decomposition exist. Due to their large number, we cannot detail them here and have to refer the reader to the survey~\cite{MaYLQZ20}. For none of them, proven performance guarantees exist.

\textbf{Our contribution:}
We propose a simple dynamic decomposition for the bi-objective MOEA/D on discrete search spaces and analyze the performance of the resulting \emph{self-adjusting \moead} (\selfAdjustingMoead; \Cref{alg:self-adjusting-moead}). In our dynamic decomposition scheme, some ideas present in other works can be found again, but overall we view it as a new development. In particular, since we aim at computing the full Pareto front of a discrete problem, we have to adjust the number of subproblems, whereas most previous schemes work with a static number of subproblems.
Furthermore, our \selfAdjustingMoead makes use of a particularly good choice for new subproblems (see \mylineref{line:update} in \Cref{alg:self-adjusting-moead} and \cref{eq:update}) that allows to find new useful subproblems quickly (see \Cref{lem:fit_incr}).

Most importantly, our \selfAdjustingMoead allows to be analyzed theoretically, resulting in the first mathematical runtime guarantees of a dynamic multi-objective evolutionary algorithm (MOEA); all previous runtime analyses for dynamic evolutionary algorithms only regarded single-objective algorithms~\cite{DoerrD20bookchapter}.
Due to the drastically increased complexity rigorously analyzing a dynamic algorithm, we focus on bi-objective problems and do not consider more advanced operators, such as crossover.
We note that such assumptions are standard practice for new research areas in this field and can be observed in the references we provided earlier and their references therein.

As our main result, we prove that the \selfAdjustingMoead solves \emph{all} monotonically distorted versions of the common LOTZ benchmark of size~$n$ in expected time $O(n^4)$ (\Cref{thm:lotz-run-time}).
This covers a huge range of different variants of the LOTZ benchmark and shows in a mathematically rigorous way that our \selfAdjustingMoead does not run into the problems detected by Li et~al.~cite{LiZZZ16} for the classic \moead.
Furthermore, this runtime guarantee is independent on the distortion strength, emphasizing that the exact shape of the objective function is not important for the \selfAdjustingMoead.

Overall, our work provides a promising solution for finding a suitable decomposition for the \moead on discrete problems, supported by the first mathematical runtime analysis of a dynamic MOEA.

\section{Setting and Terminology}
\label{sec:setting}

Let~$\N$ denote the set of all natural numbers (with~$0$), and let~$\R$ denote the set of all reals.
For all $m, n \in \R$, we define $[m .. n] \coloneqq [m, n] \cap \N$ as well as $[n] \coloneqq [1 .. n]$. For a vector $v \in \R^d$, we denote its components at index $i \in [d]$ by~$v_i$.

We consider bi-objective pseudo-Boolean maximization, that is, given an integer $n \in \N_{\geq 1}$, we consider the maximization of a (bi-objective) \emph{objective function} $f = (f_1,f_2) \colon \{0, 1\}^n \to \R^2$.
We call each $x \in \{0, 1\}^n$ an \emph{individual}, and~$f(x)$ the \emph{objective value of~$x$}.

We compare objective values via the \emph{weak domination} weak partial order~$\succeq$.
For all $u, v \in \R^2$, we say that \emph{$u$ weakly dominates~$v$} (written as $u \succeq v$) if and only if $u_1 \geq v_1$ and $u_2 \geq v_2$.
We extend this terminology to individuals of an objective function~$f$ and say, for all $x, y \in \{0, 1\}^n$, that \emph{$x$ weakly dominates~$y$} if and only if $f(x) \succeq f(y)$. We speak of \emph{domination}, denoted by $\succ$, if we have weak domination and at least one of the inequalities is strict.

We are interested in the set of all non-dominated objective values, known as the \emph{Pareto front (of~$f$)}, that is, the set $F(f) \coloneqq \{f(x) \mid x \in \{0, 1\}^n \land \nexists y \in \{0, 1\}^n \colon f(y) \succ f(x)\}$.
We call each value in~$F(f)$ a \emph{Pareto optimum}.

\begin{figure}[t]
  \begin{subfigure}[t]{.48 \textwidth}
    \includegraphics[width = \textwidth]{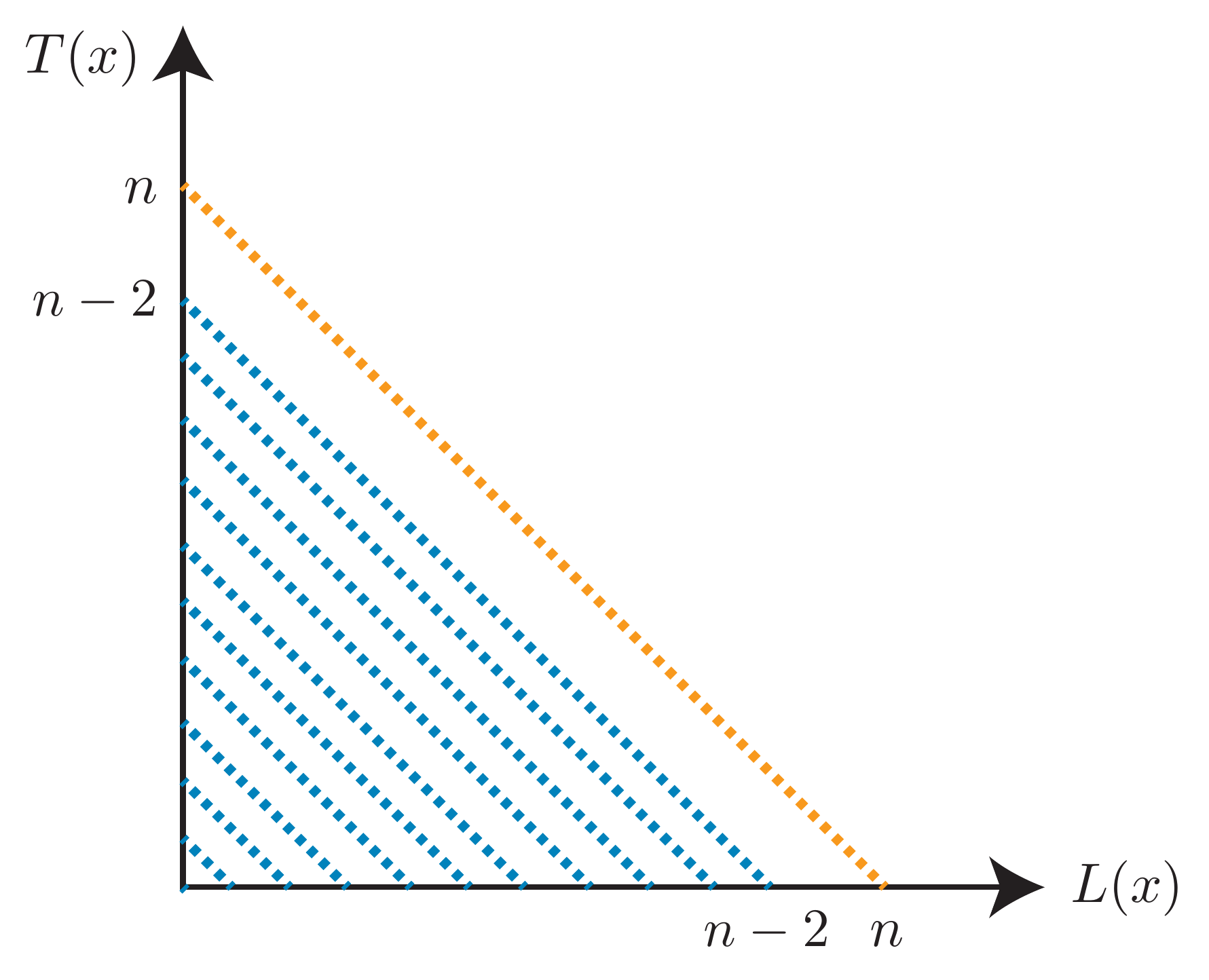}
    \caption{
      The original \lotz benchmark.
      The Pareto front consists of all bit strings that start with a consecutive number of~$1$s and then continues with a consecutive number of~$0$s.
      All worse solutions have one or more~$0$s followed by one or more~$1$s.
    }
    \label{fig:lotz-versions:lotz}
  \end{subfigure}
  \hfil
  \begin{subfigure}[t]{.48 \textwidth}
    \includegraphics[width = \textwidth]{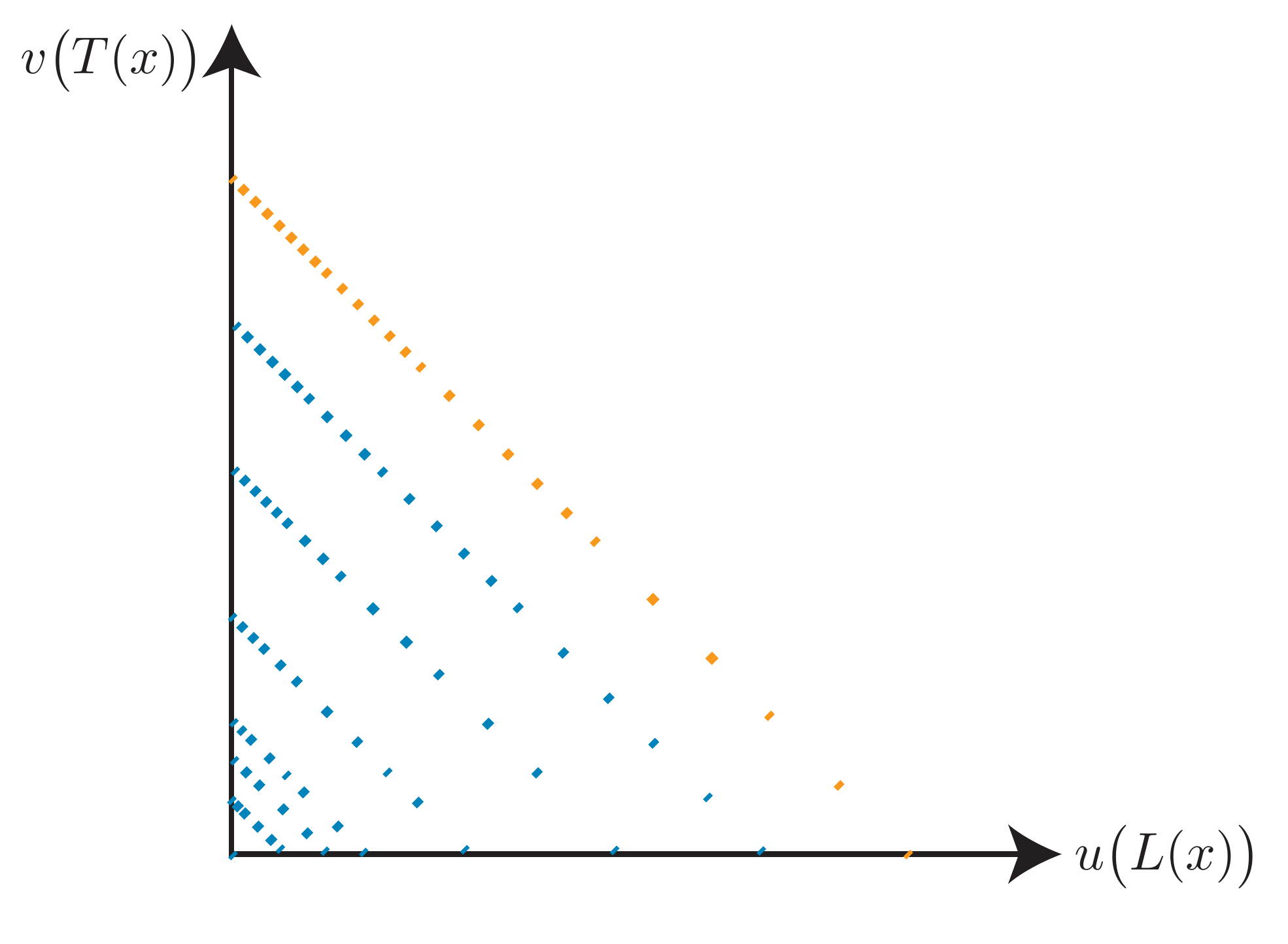}
    \caption{
      The monotonically distorted version of \lotz scales both objectives of \lotz, not necessarily in the same way, in a monotonic fashion.
      The Pareto front is not changed, but the values of the objective values and the gaps between them can differ drastically to the classic version.
    }
    \label{fig:lotz-versions:scaled-lotz}
  \end{subfigure}
  \caption{
    A graphical sketch of the LeadingOnesTrailingZeros (\lotz)~\cite{LaumannsTZ04} benchmark (\Cref{fig:lotz-versions:lotz}) and its monotonically distorted version~\cite{LiZZZ16} (\Cref{fig:lotz-versions:scaled-lotz}).
    The details are given in \Cref{sec:setting}.
    The orange points on the outmost line denote the Pareto front of the respective benchmark.
    The other blue points denote other objective values that are not Pareto optima.
  }
  \label{fig:lotz-versions}
\end{figure}

\paragraph{Objective function.}
We consider a generalization of the LeadingOnesTrailingZeros (\lotz)~\cite{LaumannsTZ04} benchmark, which is an established benchmark for the theoretical analysis of multi-objective heuristics.
It aims to optimize the consecutive number of leading~$1$s as well the consecutive number of trailing~$0$s in a bit string.
Formally, let $L, T\colon \{0, 1\}^n \to \R$ be such that $L\colon x \mapsto \sum_{i \in [n]} \prod_{j \in [i]} x_j$ and $T\colon x \mapsto \sum_{i \in [n]} \prod_{j \in [n - i + 1 .. n]} (1 - x_i)$, that is,~$L$ yields the number of leading~$1$s, and~$T$ the number of trailing~$0$s.
Then $\lotz$ is the bi-objective function $\lotz\colon \{0, 1\}^n \to \R^2, x \mapsto \bigl(L(x), T(x)\bigr)$ (\Cref{fig:lotz-versions:lotz}).
We consider monotonically distorted versions of \lotz, based on~\cite{LiZZZ16} (\Cref{fig:lotz-versions:scaled-lotz}).
Let $u, v\colon \R_{\geq 0} \to \R_{\geq 0}$ be monotonically increasing functions.
Then we define
\begin{align*}
  \monotoneLotz\colon \{0, 1\}^n \to \R^2, x \mapsto \bigl(u\bigl(L(x)\bigr), v\bigl(T(x)\bigr)\bigr).
\end{align*}

Independent of $u$ and $v$, we have $F(\monotoneLotz) = \{(1^k, 0^{n - k}) \mid k \in [0 .. n]\}$, which has size $n + 1$.
For all $x \in \{0, 1\}^n$ and $k \in [0 .. n]$ with $x = 1^k 0^{n - k}$, we call~$x$ a \emph{$(k, n - k)$-individual}.

Throughout this section, we let $n \in \N_{\geq 1}$ always denote the \emph{problem size} of \monotoneLotz, that is, the length of the bit strings, and we assume that~$n$ is given implicitly.
Similarly, we assume that~$u$ and~$v$ are arbitrary and implicitly given.
Moreover, big-O notation always refers to asymptotics in~$n$.

\paragraph{Fitness level method.}
In parts of our analysis, we make use of the fact that the \selfAdjustingMoead can only improve on certain values, such as the fitness of a best-so-far individual for a subproblem.
In order to estimate the expected waiting time until sufficiently many improvements occurred, we sum the expected times for each improvement.
As each trial for an improvement is independent of the other trials, the probability for an improvement follows a geometric law.
Bounding the improvement probability of such a law from below results in an upper bound of its expected waiting time, allowing us to bound the overall waiting time from above.

In the field of evolutionary computing, this argument is called \emph{fitness level method}, and it was introduced by \cite{Wegener01}.
The method works for any real-valued Markov chain~$(X_t)_{t \in \N}$ over a finite space~$S$ when given a potential function $h\colon S \to \R$ and a target value $v \in \R$.
We aim to bound the expected time $T = \inf \{t \in \N \mid h(X_t) = v\}$.
It is important that the potential cannot decrease, that is, for all $t \in \N$, we have $h(X_t) \leq h(X_{t + 1})$.
We call the range of~$h$ the \emph{fitness levels} (of~$h$).
Formally, we have the following theorem, which is a reformulation of a version by \cite{Sudholt13}.

\begin{theorem}[Rephrased from {\protect\cite[Theorem~$2$]{Sudholt13}}]
  \label{thm:fitnessLevelMethod}
  Let~$(X_t)_{t \in \N}$ be a real-valued Markov chain over a finite space~$S$, let $h\colon S \to \R$ be a potential function, let $v \in \R$ be a target value, and let $T = \inf \{t \in \N \mid X_t = v\}$.
  Furthermore, for all $i \in \{j \in \R \mid \exists s \in S\colon h(s) = j\} \smallsetminus \{v\} \eqqcolon I$, let~$q_i$ denote a lower bound on the probability of~$X$ transitioning from a state with potential~$i$ to a state with a strictly larger potential.
  Then $\mathrm{E}[T] \leq \sum\nolimits_{i \in I} \frac{1}{q_i}$.
\end{theorem}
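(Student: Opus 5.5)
We prove the bound by decomposing the hitting time into the waiting times spent on the individual fitness levels, bounding each by a geometric random variable, and then summing.

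\emph{Setup.} Let $T = \inf\{t \mid h(X_t) = v\}$; this is clearly the intended meaning of the hitting time, as the potential reaching $v$ is the event of interest. We assume $v$ is the maximum value of $h$. Since $S$ is finite, the set $I$ of non-target fitness levels is finite.

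\emph{Step 1: time spent per level.} For each $i \in I$, let $T_i$ be the number of steps $t < T$ with $h(X_t) = i$. Because $h(X_t)$ is non-decreasing in $t$, the chain visits each level during one contiguous time interval, possibly empty. Every step before $T$ is at some non-target level, so $T = \sum_{i \in I} T_i$. By linearity of expectation, $\mathrm{E}[T] = \sum_{i \in I} \mathrm{E}[T_i]$.

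\emph{Step 2: bound $\mathrm{E}[T_i] \le 1/q_i$.} Fix $i$ and condition on the chain ever entering level $i$; otherwise $T_i = 0$. While the chain is at level $i$, it is in some state $s$ with $h(s) = i$. By the Markov property and the definition of $q_i$, the next step reaches a strictly larger potential with probability at least $q_i$, independently of the past. Since the potential cannot decrease, the only alternative is staying at level $i$.

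It follows that for all $k \geq 0$,
\[
  \Pr[T_i > k \mid \text{level } i \text{ entered}] \le (1 - q_i)^k .
\]
This holds by induction on $k$, conditioning at each step on the current state and applying the Markov property. Hence $T_i$ is stochastically dominated by a geometric random variable with success probability $q_i$, and
\[
  \mathrm{E}[T_i] = \sum_{k \ge 0} \Pr[T_i > k] \le \sum_{k \ge 0} (1 - q_i)^k = \frac{1}{q_i}.
\]
If $q_i = 0$ for some $i$, the claimed bound is infinite and holds trivially.

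\emph{Step 3: conclude.} Summing Step 2 over $i \in I$ gives $\mathrm{E}[T] \le \sum_{i \in I} 1/q_i$.

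The only delicate point is the inductive tail bound in Step 2. The state within level $i$ may change without the potential changing, so the argument must use the fact that $q_i$ lower-bounds the improvement probability uniformly over all states of level $i$, not only over a single state.
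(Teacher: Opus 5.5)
Your proof is correct and is the standard fitness-level argument. The paper gives no proof of its own: it cites Sudholt and, in the paragraph before the theorem, sketches exactly your decomposition of $T$ into per-level waiting times, each dominated by a geometric random variable with parameter $q_i$. You also correctly read the hitting time as $\inf\{t \mid h(X_t) = v\}$ and rely on the monotonicity $h(X_t) \le h(X_{t+1})$ that the paper states just before the theorem. Assuming $v$ is the maximal level loses nothing, since otherwise the maximal level lies in $I$ and admits no positive $q_i$, so the bound is vacuous.
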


\section{The \moead}
\label{sec:algorithms}

\begin{algorithm2e}[t]%
  \caption{\label{alg:moead}
    The \emph{decomposition-based multi-objective evolutionary algorithm} (\moead)~\protect\cite{ZhangL07} with the Chebyshev approach, maximizing a bi-objective function of size $n \in \N_{\geq 1}$.
  }
  \KwIn{A bi-objective function $f:\{0, 1\}^n\to \R^2$, a termination criterion, a mutation operator $\mathrm{mut}\colon \{0, 1\}^n \to \{0, 1\}^n$, a decomposition number $N \in N_{\geq 2}$, a weight vector $w \in [0, 1]^{N}$, as well as a neighborhood size $T \in [N]$.
  }
  \KwOut{
    A set~$P$ of incomparable best-so-far objective values.
  }
  \emph{Initialization}: set $P = \emptyset$;
  for each $i \in [N]$, choose~$x_{w_i}$ uniformly at random from $\{0, 1\}^n$;
  set $z^*_1 = \max_{i \in [N]} f_1(x_{w_i})$, $z^*_2 = \max_{i \in [N]} f_2(x_{w_i})$, and iteratively add $f(x_{w_i})$ to~$P$ if there is no $j \in [i - 1]$ such that~$x_i$ is weakly dominated by~$x_j$;
  for each $i \in [N]$, let~$B_i$ denote the indices of the~$T$ weights closest to~$w_i$\;

  \While{stopping criterion is not met}{
    \For{each subproblem $i \in [N]$}{
      \emph{Generate offspring}: $y \gets \mathrm{mut}(x_{w_i})$\;\nllabel{line:classicMoeadMutation}
      \emph{Update~$z^*$}: set $z^*_1 \gets \max(z^*_1, f_1(y))$, $z^*_2 \gets \max(z^*_2, f_2(y))$\;
      \emph{Update neighbors}: for all $j \in B_i$, if $g_{w_j}(y, z^*) \leq g_{w_j}(x_{w_j}, z^*)$, then $x_{w_j} \gets y$\;
      \emph{Update $P$}: remove all elements weakly dominated by~$f(y)$ from~$P$ and add~$f(y)$ to~$P$ if it is not weakly dominated by an element of~$P$\;
    }
  }
  return~$P$\;
\end{algorithm2e}

The \emph{decomposition-based multi-objective evolutionary algorithm} (\moead) by Zhang and Li~\cite{ZhangL07} decomposes a given bi-objective function into several single-objective subproblems, each of which is optimized in a co-evolutionary manner, that is, they are optimized in parallel and exchange solutions.
The decomposition is determined by 1) the subproblem type (the \emph{approach}) 2) the number of subproblems $N \in \N_{\geq 2}$ (the \emph{decomposition number}), and 3) the \emph{weight vector} $w \in [0, 1]^N$ that weights the objectives per subproblem.
This results in a flexible algorithm but also in a great responsibility of the user, as different parameter choices can result in drastic performance differences.

First, we explain the classic \moead (\Cref{alg:moead}) for the very commonly used~\cite{XuXM20} Chebyshev approach.
Afterward, we present our self-adjusting variant of the \moead (\Cref{alg:self-adjusting-moead}), which removes the choices of~$N$ and~$w$ and adjusts them on the fly.

\subsection{Classic MOEA/D With the Chebyshev Approach}
\label{sec:algorithms:classic-moead}

Given a decomposition number $N \in \N_{\geq 2}$ and a weight vector $w \in [0, 1]^N$, the \moead optimizes~$N$ single-objective subproblems $\{g_{w_i}\}_{i \in [N]}$, each of which is based on the original bi-objective function $f\colon \{0, 1\}^n \to \R$, the weight vector~$w$, and a \emph{reference point} $z^* \in \R^2$, which reflects the currently best known values of both objectives.
A very common way of modeling the subproblems, known as the \emph{Chebyshev approach}, is to define for all $i \in [N]$ that
\begin{align}
  \label{eq:chebyshev-approach}
  g_{w_i}\colon \{0, 1\}^n \times \R^2 \to \R,\qquad
  (x, z^*) \mapsto \max\{w_i \cdot |z^*_1 - f_1(x)|, (1 - w_i) \cdot |z^*_2 - f_2(x)|\}.
\end{align}
That is, subproblem $i \in [N]$ measures the $w_i$-weighted maximum difference between an individual and the reference point.
Each subproblem is subject to \emph{minimization}.

We call the reference point~$z^*$ \emph{optimal} if and only if its components are the maximum values of each of the objectives in the Pareto front.
For the optimal reference point and for each subproblem~$g$, we call an individual \emph{$g$-optimal} if and only if its $g$-function value is an optimum of~$g$.

The \moead aims to find the Pareto optima (of~$f$) by searching for the $g$-optima and maintaining an \emph{archive} of all currently seen objective values that are not weakly dominated.
To this end, the algorithm optimizes each subproblem following an evolutionary search.
That is, it maintains for each $i \in N$ a \emph{best-so-far} individual~$x_{w_i}$ that has the best $g_{w_i}$-value among all so-far seen individuals.

Iteratively, the \moead creates new individuals, based on~$x_{w_i}$.
These new individuals are called \emph{offspring}, and they are generated via mutation.
The input individuals for either operator are called \emph{parent(s)}.
In this paper, we consider two common mutation operators.
\emph{Standard bit mutation} creates an offspring $y \in \{0, 1\}^n$ from $x \in \{0, 1\}^n$ by choosing for each bit independently to flip it with probability~$\frac{1}{n}$, that is, for all $k \in [n]$, we have $y_k = 1 - x_k$ with probability~$\frac{1}{n}$ as well as $y_k = x_k$ otherwise.
\emph{1-bit mutation} creates an offspring $y \in \{0, 1\}^n$ from $x \in \{0, 1\}^n$ by choosing exactly one position $k \in [n]$ uniformly at random, flipping the bit at position~$k$, and copying all other bits.
That is, for all $k \in [n]$, we have $y_k = 1 - x_k$ as well as $y_k = x_k$ otherwise.

We note that the \moead classically also applies an operation known as \emph{crossover} in \mylineref{line:classicMoeadMutation} before performing mutation on the crossover result.
Crossover takes two individuals and combines them into a new individual (in a way that needs to be specified by the user, similar to mutation).
The parents for subproblem $i \in [N]$ are selected uniformly at random from the indices in~$B_i$.
Since our theoretical analysis focuses on mutation, we omit crossover in this paper, noting that it can be added to our self-adjusting variant analogously to how it is added to the classic \moead.

An offspring~$y$ is compared to all best-so-far individuals within a certain neighborhood, and~$y$ replaces each such individual~$x$ with an equal or worse fitness.
During this step, the reference point~$z^*$ is also updated, then the archive.
We note that while the neighborhood size has a large range of possible values, it was recently shown that already a minimal size of~$1$, that is, a communication just among direct neighbors, can lead to best-possible asymptotic runtimes~\cite{DoerrEKMK26}.
This suggests that the choice of this parameter may be less important as long as subproblems can exchange solutions at all.

\begin{algorithm2e}[t!]%
  \caption{\label{alg:self-adjusting-moead}
    The self-adjusting decomposition-based multi-objective evolutionary algorithm (\selfAdjustingMoead), maximizing a pseudo-Boolean bi-objective function of size $n \in \N_{\geq 1}$.
  }
  \KwIn{A bi-objective function $f\colon\{0, 1\}^n\to \R^2$, a mutation operator $\mathrm{mut}\colon \{0, 1\}^n \to \{0, 1\}^n$, and a stopping criterion.
  }
  \KwOut{
    A set~$P$ of incomparable best-so-far objective values.
  }
  \emph{Initialization}: set $W = \{0, 1\}$ and $P = \emptyset$; choose~$x_0$ and~$x_1$ uniformly at random from $\{0, 1\}^n$;
  set $z_1^* = \max(f_1(x_0), f_1(x_1)), z_2^* = \max(f_2(x_0), f_2(x_1))$;
  add~$f(x_0)$ and~$f(x_1)$ to~$P$ and remove one of them from~$P$ if one is dominated by the other;
  for each $w \in \{0, 1\}$, if $g_w(x_{1 - w}, z^*) < g_w(x_w, z^*)$, replace~$x_w$ by~$x_{1 - w}$.

  \While{stopping criterion is not met}{
    \lWhile{there are consecutive weights $w_1, w_2, w_3\in W$ with $f(x_{w_1}) = f(x_{w_2}) =\allowbreak f(x_{w_3})$} {%
      remove $w_2$ from $W$%
    }\nllabel{line:removal}
    \For{all consecutive weights $w_1, w_2\in W$ with $f(x_{w_1}) \neq f(x_{w_2})$} {
      find $w$ such that $g_w(x_{w_1}, z^*) = g_w(x_{w_2}, z^*)$ (see~\cref{eq:update})\;\nllabel{line:update}
      add $w$ to $W$ and set $x_w = \mathrm{arg\,min}_{x\in P} g_w(x, z^*)$ once we are done with this loop\;
    }
    \For{each subproblem $w\in W$}{
      \emph{Generate offspring}: $y \gets \mathrm{mut}(x_w)$\;\nllabel{line:mutation}
      \emph{Update~$z^*$}: set $z^*_1 \gets \max(z^*_1, f_1(y))$, $z^*_2 \gets \max(z^*_2, f_2(y))$\;
      \emph{Update~$x_v$}: for all $v\in W$, if $g_v(y, z^*) \leq g_v(x_v, z^*)$, then $x_v \gets y$\;\nllabel{line:optimalIndividuals}
      \emph{Update $P$}: remove all elements weakly dominated by~$f(y)$ from~$P$ and add~$f(y)$ to~$P$ if it is not weakly dominated by an element of $P$\;
    }
  }
  return~$P$\;
\end{algorithm2e}

\subsection{Self-Adjusting MOEA/D}
\label{sec:algorithms:self-adjusting-moead}
Due to the difficult and important decision of choosing a good decomposition when applying the \moead, we propose a variant that adjusts the number of subproblems as well as the respective weights automatically during the optimization process.
We call this variant the \emph{self-adjusting \moead} (\selfAdjustingMoead; \Cref{alg:self-adjusting-moead}).
We note that while we present the \selfAdjustingMoead with the Chebyshev approach (\cref{eq:chebyshev-approach}), it is applicable to any approach, although this may affect the complexity of some of the involved operations (such as the computation performed in \mylineref{line:update}).

The \selfAdjustingMoead operates mainly in the same way as the classic \moead.
That is, it creates offspring, selects best-so-far individuals, and updates the archive in an almost identical way.
The main difference is how the \selfAdjustingMoead chooses to adjust its number of subproblems and their weights.
For simplicity, we assume the full neighborhood, that is, an improvement is communicated to all subproblems.

The \selfAdjustingMoead maintains a set~$W$ of weights, each in~$[0, 1]$, with the weights~$0$ and~$1$ being initially in~$W$ and never leaving it.
In the following, we assume that~$W$ is ordered increasingly and that its elements are indexed from~$1$ to~$|W|$.\footnote{This indexation is \emph{not} reflected in \Cref{alg:self-adjusting-moead}, as it is merely useful for explaining the algorithmic idea.}
Whenever the algorithm finds a pair of \emph{consecutive} weights, that is, for $i \in [|W| - 1]$, a pair $(w_i, w_{i + 1})$, such that the objective values of the respective best-so-far individuals disagree, that is, such that $f(x_{w_i}) \neq f(x_{w_{i + 1}})$, then the \selfAdjustingMoead adds another weight, aiming to find further Pareto optima of~$f$.
Moreover, if the objective values of the best-so-far individuals of more than two consecutive weights coincide, the algorithm removes all but the two extreme involved weights, in order to reduce the cardinality of~$W$.

If there is an $i \in [|W| - 1]$ with $f(x_{w_i}) \neq f(x_{w_{i + 1}})$, the \selfAdjustingMoead adds the weight $w \in [0, 1]$ such that $g_w(x_{w_i}, z^*) = g_w(x_{w_{i + 1}}, z^*)$, since this is an extremal point where the two $g$-functions coincide for~$x_{w_i}$ and~$x_{w_{i + 1}}$, making it a promising choice, given that the objective value of~$x_{w_i}$ and of~$x_{w_{i + 1}}$ differs.
For the Chebyshev approach, we compute~$w$ directly as follows (and derive it below) as
\begin{align}
  \label{eq:update}
  w = \frac{|z^*_2 - f_2(x_{w_{i + 1}})|}{|z^*_1 - f_1(x_{w_i})| + |z^*_2 - f_2(x_{w_{i + 1}})|} .
\end{align}
To see why \cref{eq:update} holds, we first introduce for all $z \in \R$ and all $i \in \{1, 2\}$ the notation $d_i\colon \R^2 \to \R, x \mapsto |z_i - f_i(x)|$.
Thus, it holds for all $w \in [0, 1]$ and all $x \in \R^2$ that $g_w(x, z) = \max \{w \cdot d_1(x), (1 - w) \cdot d_2(x)\}$.
Note that, for all $w \in [0, 1]$, we have that the linear functions of the form $w \mapsto w \cdot d_1(x)$ all meet in $(0, 0)$, and the linear functions of the form $w \mapsto (1 - w) \cdot d_2(x)$ all meet in $(1, 0)$.
Last, for all $x \in \R^2$, let $w^*_x = d_2(x) / \bigl(d_1(x) + d_2(x)\bigr)$ denote the minimum of the function $w \mapsto g_w(x, z)$.
We argue in the following that $g_w(x_{w_i}, z^*)$ and $g_w(x_{w_{i + 1}}, z^*)$ meet at a point $w \in [0, 1]$ such that $w \geq w^*_{x_{w_i}}$ and $w \leq w^*_{x_{w_{i + 1}}}$, noting that \cref{eq:update} is the intersection of $w \mapsto w \cdot d_1(x_{w_i})$ and $w \mapsto (1 - w) \cdot d_2(x_{w_{i + 1}})$, at $w = d_2(x_{w_{i + 1}}) / \bigl(d_1(x_{w_i}) + d_2(x_{w_{i + 1}})\bigr)$.

An important observation is that, by \mylineref{line:optimalIndividuals}, we have $g_{w_i}(x_{w_i}, z^*) \leq g_{w_i}(x_{w_{i + 1}}, z^*)$ as well as $g_{w_{i + 1}}(x_{w_i}, z^*) \geq g_{w_{i + 1}}(x_{w_{i + 1}}, z^*)$.
Thus, it cannot be that $w_i < w^*_{x_{w_i}}$ and $w_{i + 1} > w^*_{x_{w_{i + 1}}}$ at the same time, as~$x_{w_i}$ is otherwise replaced by~$x_{w_{i + 1}}$ or the other way around, contradicting that $x_{w_i} \neq x_{w_{i + 1}}$, due to $f(x_{w_i}) \neq f(x_{w_{i + 1}})$.
Hence, the intersection occurs if $w_i \geq w^*_{x_{w_i}}$ or $w_{i + 1} \leq w^*_{x_{w_{i + 1}}}$.
If $w_i \geq w^*_{x_{w_i}}$, then the intersection occurs for a $w \in [w_i, 1]$ with $w \leq w^*_{x_{w_{i + 1}}}$, for an analogous argument as before.
Similarly, if $w_{i + 1} \leq w^*_{x_{w_{i + 1}}}$, then the intersection occurs for a $w \in [0, w_{i + 1}]$ with $w \geq w^*_{x_{w_i}}$, concluding how to derive \cref{eq:update}.

\paragraph{Computational complexity.}
We note that the \selfAdjustingMoead remains very similar to the classic \moead in terms of computational complexity.
The classic \moead looks for~$N$ improvements in each iteration, whereas the \selfAdjustingMoead looks for~$|W|$ improvements each iteration, noting that~$|W|$ can change during the run.
Hence, the \selfAdjustingMoead only performs more computations if $|W| > N$.

\paragraph{Setting.}
We consider the \selfAdjustingMoead both with standard bit mutation as well as with 1-bit mutation as its mutation operator.
Note that the probability to flip exactly one specific bit at position $i \in [n]$ is $\Omega(\frac{1}{n})$ for either operator, which follows from the definition of 1-bit mutation and follows for standard bit mutation by noting that the probability is $\frac{1}{n} (1 - \frac{1}{n})^{n - 1} \geq \frac{1}{e n}$.

\paragraph{Runtime.}
We define the \emph{runtime} of an \moead variant~$A$ on a bi-objective function~$f$ to be the (random) number of objective-function evaluations until the archive~$P$ of~$A$ \emph{covers} the Pareto front of~$f$, that is, until $P \supseteq F(f)$, for the first time at the end of an iteration of the main loop.
We note that it is typical to only count the function evaluations, as these are usually the most costly operations of a randomized search heuristic~\cite{DoerrN20}.
We assume that each created individual is evaluated exactly once, namely upon creation.

\paragraph{Changes for continuous problems.}
For continuous problems, the equality checks are not reasonable, as it is unlikely that equality occurs with uncountably many objective values.
Hence, we propose in such settings to define a tolerance $\varepsilon \in \R_{> 0}$ upfront and to treat all numbers with an absolute distance of at most~$\varepsilon$ as equal.

\section{Mathematical Runtime Analysis on Generalized LOTZ Functions}
\label{sec:theory}

Our main result is that the \selfAdjustingMoead (\Cref{alg:self-adjusting-moead}) computes the complete Pareto front of \emph{any} monotonically distorted \lotz function in an expected number of $O(n^4)$ objective-function evaluations.

\begin{theorem}
  \label{thm:lotz-run-time}
  The \selfAdjustingMoead optimizes \monotoneLotz in $O(n^4)$ expected function evaluations.
\end{theorem}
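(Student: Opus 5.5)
The proof runs in three phases, each bounded with the fitness level method (\Cref{thm:fitnessLevelMethod}).

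\textbf{Phase 1: optimal reference point and both extremes.}
We first wait until $z^*$ is optimal, that is, $z^* = (u(n), v(n))$. We also wait until $x_0 = 0^n$ and $x_1 = 1^n$.

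We use the facts that $g_1(x, z^*) = |z^*_1 - f_1(x)|$ and that $u$ is increasing. Hence the value $L(x_1)$ never decreases. Strictly speaking, $z^*$ may change, but $x_1$ is always an individual with maximal $L$ among those seen, up to ties.

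The $L$-value increases whenever the first $0$ of $x_1$ is flipped, which has probability $\Omega(1/n)$ for either operator. The mutation of $x_1$ is performed in every iteration, since $1 \in W$ always. There are $n$ fitness levels, so $O(n^2)$ iterations suffice; the same holds for $x_0$ and $T$.

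Each iteration costs $|W|$ evaluations. We therefore need an a priori bound $|W| = O(n)$. By the removal rule in \mylineref{line:removal}, between any two weights whose best-so-far individuals have different objective values, at most two weights share a value. Moreover, the archive-type values taken by the $x_w$ are distinct objective values of $\monotoneLotz$. Monotonicity in $w$ restricts them to a chain of incomparable or boundary points, so there are $O(n)$ distinct values, giving $|W| = O(n)$.

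This yields $O(n^3)$ evaluations for Phase 1.

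\textbf{Phase 2: a structural invariant.}
Once $z^*$ is optimal and the extremes are found, I would prove the following invariant. For consecutive weights $w_i < w_{i+1}$, the individuals $x_{w_i}$ and $x_{w_{i+1}}$ are Pareto optima $(a, n-a)$ and $(b, n-b)$ with $a \ge b$, and every Pareto optimum lies between them in some gap.

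The core is an analogue of \Cref{lem:fit_incr}. Suppose the gap between $(a,n-a)$ and $(b,n-b)$ has $a > b+1$. The inserted weight $w$ from \cref{eq:update} equalizes $w\,d_1(x_{w_i})$ and $(1-w)\,d_2(x_{w_{i+1}})$. For this $w$, every Pareto optimum $(k, n-k)$ with $b < k < a$ satisfies
\[
g_w(1^k0^{n-k}, z^*) < g_w(x_{w_i}, z^*) = g_w(x_{w_{i+1}}, z^*).
\]
This holds because $d_1$ and $d_2$ of such a point are strictly smaller than $d_1(x_{w_i})$ and $d_2(x_{w_{i+1}})$, respectively, by strict monotonicity. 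If $u$ or $v$ are only weakly increasing, this step needs care; I would treat equal objective values as the same front point.

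Consequently, the new subproblem $w$ is improved by any Pareto optimum strictly inside the gap. Also, $x_w$ (initially one of the two endpoints) produces a neighbouring Pareto optimum with probability $\Omega(1/n)$ by flipping the single boundary bit.

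\textbf{Phase 3: counting and the main obstacle.}
We use as potential the number of covered Pareto optima, which is at most $n+1$ and never decreases because the archive keeps them. In each iteration in which the front is not yet covered, some gap has a subproblem whose offspring hits a new Pareto optimum with probability $\Omega(1/n)$.

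This gives $O(n^2)$ iterations. Combined with $|W| = O(n)$ evaluations per iteration, this is $O(n^3)$; the slack up to $O(n^4)$ absorbs a possibly cruder bound such as $|W| = O(n^2)$ or extra waiting in Phase 1.

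The main obstacle is the invariant in Phase 2: the dynamics of $W$ must never destroy a gap's "witness". Two concerns need to be ruled out.
\begin{itemize}
\item Removing a middle weight in \mylineref{line:removal} must not discard progress. This follows since the removed weight has the same objective value as its neighbours.
\item Replacements in \mylineref{line:optimalIndividuals} triggered by new offspring must keep the $x_w$ monotone along $W$.
\end{itemize}
I would first prove by induction over iterations that the map $w \mapsto L(x_w)$ is non-increasing in $w$ on $W$, which is what makes the gap argument valid.
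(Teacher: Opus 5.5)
\textbf{The Phase~2 invariant is false, and your $O(n^2)$-iteration count depends on it.} Your Phase~1 matches the paper's: the reference point is optimal after $O(n^2)$ iterations, and $|W| = O(n)$ as in \Cref{lem:pop_sz}. The trouble is the invariant that consecutive weights hold Pareto optima, with $L(x_w)$ monotone in~$w$. No induction can establish it, because nothing forces the best-so-far individuals of intermediate weights onto the front.
\begin{itemize}
\item When $z^*$ becomes optimal, the weights inserted during Phase~1 still hold whatever individuals they acquired under the old reference point. These are typically of the form $1^a0\cdots1\cdots0^b$.
\item A newly inserted weight receives $x_w = \mathrm{arg\,min}_{x \in P} g_w(x, z^*)$. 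The archive~$P$ can contain non-Pareto-optimal points, since such a point leaves~$P$ only once something weakly dominating it has been sampled.
\item In \mylineref{line:optimalIndividuals}, an offspring off the front can replace a Pareto-optimal~$x_{w'}$ that is not $g_{w'}$-optimal. For example, take $u$ and $v$ the identity, $z^* = (n, n)$, $n$ a multiple of~$4$ with $n > 8$, and $x_{1/2} = 1^{n/4}0^{3n/4}$, so $g_{1/2}(x_{1/2}) = 3n/8$. The offspring $1^{n/2}010^{n/2-2}$ has $g_{1/2}$-value $n/4 + 1 < 3n/8$ and replaces it.
\end{itemize}
Once some~$x_w$ is off the front, your potential can stall. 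The only non-$g_w$-optimal individuals may be non-Pareto ones, and their one-bit improvements are again non-Pareto. Then the number of covered Pareto optima does not increase with probability $\Omega(1/n)$ per iteration, and you have no bound on how many such intermediate steps occur.

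\textbf{How the paper avoids this.} It needs no invariant of this kind.
\begin{itemize}
\item In \Cref{lem:fit_incr}, the potential is the number of objective values weakly dominated by the archive. It never decreases and has at most $(n+1)^2$ levels.
\item \Cref{lem:progress} guarantees that before every mutation phase some~$x_w$, Pareto-optimal or not, is not $g_w$-optimal. So a one-bit improvement~$y$ arises with probability $\Omega(1/n)$.
\item Such a $y$ cannot be weakly dominated by the archive, because~$x_w$ is at least as good for~$g_w$ as every archive member.
\end{itemize}
This gives $O(n)$ expected iterations per level, hence $O(n^3)$ iterations and $O(n^4)$ evaluations. The factor~$n$ by which your bound would beat the theorem is exactly the cost of counting progress through dominated objective values.

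\textbf{Two further points.} Your gap argument also silently needs that archive property, to know that the Pareto optimum produced from the inserted~$x_w$ is actually new. Note too that this~$x_w$ is the $g_w$-best archive member, not necessarily one of the two endpoints.

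Your orientation is also reversed. Since $x_0 = 0^n$ and $x_1 = 1^n$, $L(x_w)$ is non-decreasing in~$w$. With your labeling $a \geq b$, both $d_1(1^k0^{n-k}) < d_1(x_{w_i})$ and $d_2(1^k0^{n-k}) < d_2(x_{w_{i+1}})$ fail; they hold only after swapping the two endpoints.
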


This bound is only worse by a factor of~$n$ than the \emph{best} known bound of the classic \moead on standard \lotz for a perfect decomposition, that is, one where the solutions to all subproblems cover the entire Pareto front.
For some monotonically distorted versions of \lotz, it is known that the common decomposition of the classic \moead with evenly spread weights does not result in a perfect decomposition anymore, which is claimed as detrimental for the algorithm~\cite{LiZZZ16}.
The reason is that the uniform decomposition does not need to match the structure of the Pareto front of the distorted function very well.
For example, consider \Cref{fig:lotz-versions:scaled-lotz}, where the distance among different Pareto optima is far from uniform.
Depending on the nature of the distortion, some Pareto optima may not be closest to any subproblem.
The \selfAdjustingMoead does not have this problem, due to its dynamic decomposition, showing that it is far more robust.

Before we prove \Cref{thm:lotz-run-time}, we prove various useful statements that we utilize for the final proof.

\Cref{lem:structure_decomp} shows that there is a partition of the weights of the \selfAdjustingMoead such that each $g$-optimal solution is also a Pareto optimum of \monotoneLotz, given that the reference point is optimal, that is, $\bigl(u(n), v(n)\bigr)$.
We note that the boundaries of the partition in \Cref{lem:structure_decomp} match \cref{eq:update} (as well as the trivial weights~$0$ and~$1$ that are always part of the algorithm), which shows that the \selfAdjustingMoead is, in theory, capable of finding weights such that all Pareto optima are discerned.

\begin{restatable}{lemma}{structureDecomp}
  \label{lem:structure_decomp}
  Consider \monotoneLotz, with monotonically increasing functions~$u$ and~$v$ as well as \cref{eq:chebyshev-approach} with $z^* = \bigl(u(n), v(n)\bigr)$.
  Let $w^*_0 \coloneqq 0$, for all $i \in [n]$, let $w^*_i \coloneqq \frac{v(n) - v(n - i)}{u(n) - u(i - 1) + v(n) - v(n - i)}$, and let $w^*_{n + 1} \coloneqq 1$.
  Then $\{[w^*_0, w^*_1]\} \cup \{(w^*_{i}, w^*_{i + 1}]\}_{i \in [n]}$ is a partition of $[0, 1]$ such that for all $i \in [0 .. n]$ and all $w \in (w^*_i, w^*_{i + 1}]$ (or all $w \in [w^*_0, w^*_1]$ in the case of $i = 0$), we have that each $(i, n - i)$-individual is the sole optimum of~$g_w$.
\end{restatable}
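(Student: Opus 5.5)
The proof is a direct calculation in the two distances to the reference point. Throughout I assume $u$ and $v$ are \emph{strictly} increasing; with ties in $u$ or $v$ the uniqueness claim can fail, so I would state this assumption explicitly.

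\textbf{Setup.} With $z^* = \bigl(u(n), v(n)\bigr)$, every individual $x$ has $d_1(x) = u(n) - u(L(x)) \geq 0$ and $d_2(x) = v(n) - v(T(x)) \geq 0$. For $j \in [0 .. n]$, write $a_j \coloneqq u(n) - u(j)$ and $b_j \coloneqq v(n) - v(n - j)$. These are exactly the two distances of a $(j, n-j)$-individual $x^{(j)}$, so $g_w(x^{(j)}, z^*) = \max\{w a_j, (1-w) b_j\}$. Strict monotonicity gives that $a_j$ is strictly decreasing with $a_n = 0$, and $b_j$ is strictly increasing with $b_0 = 0$. For $i \in [n]$, $w^*_i$ is by definition the unique solution of $w a_{i-1} = (1-w) b_i$. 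Hence for $w > w^*_i$ we have $w a_{i-1} > (1-w) b_i$, and for $w < w^*_i$ the reverse strict inequality holds.

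\textbf{Step 1: the sets form a partition.} For $i \in [n]$, the definition gives $w^*_i/(1 - w^*_i) = b_i / a_{i-1}$. The numerator $b_i$ is strictly increasing in $i$ and the denominator $a_{i-1}$ is strictly decreasing and positive. So $w^*_i$ is strictly increasing in $i$. Moreover $w^*_1 > 0$ because $b_1 > 0$, and $w^*_n = 1$ because $a_{n-1} > 0$ and $b_n > 0$. Consequently $0 = w^*_0 < w^*_1 < \dots < w^*_n$, and the given sets tile $[0,1]$. Note that $w^*_n = w^*_{n+1} = 1$, so the last set $(w^*_n, w^*_{n+1}]$ is empty; the $(n,0)$-individual is optimal only at the boundary point $w = 1$.

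\textbf{Step 2: $x^{(i)}$ beats every other Pareto optimum.} Fix $w$ in the set for $i$, so $w^*_i < w \leq w^*_{i+1}$.
\begin{itemize}
\item For $j < i$: $g_w(x^{(j)}) \geq w a_j \geq w a_{i-1}$. Since $w > 0$ and $a_i < a_{i-1}$, we have $w a_{i-1} > w a_i$. Since $w > w^*_i$, we have $w a_{i-1} > (1-w) b_i$. Hence $g_w(x^{(j)}) > g_w(x^{(i)})$.
\item For $j > i$: $g_w(x^{(j)}) \geq (1-w) b_j \geq (1-w) b_{i+1}$. Since $w \leq w^*_{i+1}$, we get $(1-w) b_{i+1} \geq w a_i$. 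Since $w < 1$ and $b_{i+1} > b_i$, we get $(1-w) b_{i+1} > (1-w) b_i$.
\end{itemize}
The second case is the main obstacle. At the right endpoint $w = w^*_{i+1}$, the bound $(1-w) b_{i+1} \geq w a_i$ is an equality. If in addition $w a_i \geq (1-w) b_i$, then $g_w(x^{(i+1)}) = w a_i = g_w(x^{(i)})$, and both individuals are optimal. Away from the endpoint everything is strict. So I expect the argument to give a \emph{sole} optimum on the open interval $(w^*_i, w^*_{i+1})$, and at the right endpoint only that $x^{(i)}$ is \emph{an} optimum, tied with $x^{(i+1)}$. I would therefore prove the lemma with the endpoint tie flagged; the half-open convention in the statement is presumably only a tie-breaking choice.

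\textbf{Step 3: $x^{(i)}$ beats every non-Pareto individual.} Take $x$ with $\ell \coloneqq L(x)$ and $t \coloneqq T(x)$, where $\ell + t \leq n - 1$, and set $k \coloneqq n - t > \ell$. Then
\[
  g_w(x) = \max\{w a_\ell, (1-w) b_k\}.
\]
If $\ell < i$, then $g_w(x) \geq w a_\ell \geq w a_{i-1}$, which is strictly above $g_w(x^{(i)})$ exactly as in the case $j < i$ of Step 2. If $\ell \geq i$, then $k \geq i+1$, so $g_w(x) \geq (1-w) b_k \geq (1-w) b_{i+1}$, and the case $j > i$ of Step 2 applies verbatim. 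The endpoint tie can only occur when $k = i+1$, which forces $\ell = i$. So non-Pareto individuals lose strictly whenever $w < w^*_{i+1}$. The case $i = 0$ on the closed interval $[0, w^*_1]$ is the same computation, with the first bullet vacuous.
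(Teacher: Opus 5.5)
Your proof is correct apart from one computational slip, and it takes a different, cleaner route than the paper. The paper fixes the $(i,n-i)$-individual and splits competitors $y$ into six cases according to how $L(y)$ and $T(y)$ compare with $i$ and $n-i$. For each $y$ it introduces a breaking point (the weight at which the maximum in $g_w(y)$ switches terms) and compares it with $w^*_i$ or $w^*_{i+1}$ via a sub-case distinction on denominators. It handles the ``more leading ones'' side by appeal to symmetry and treats $w\in\{0,1\}$ separately. It never checks that the $w^*_i$ increase, i.e., that the sets really form a partition.

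You instead use only the two lower bounds $g_w(y)\ge w\,a_{L(y)}$ and $g_w(y)\ge (1-w)\,b_{n-T(y)}$, and compare them with the crossing conditions defining $w^*_i$ and $w^*_{i+1}$. Non-Pareto individuals fall under the same two inequalities, $w=0,1$ need no separate case, and your Step~1 supplies the monotonicity the paper omits. The paper's breaking-point view ties the lemma to \cref{eq:update}. Yours is shorter and shows exactly where equality can occur.

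That last point matters: your endpoint caveat is right, and it is unconditional. At $w=w^*_{i+1}$ with $i\in[0..n-1]$ you have $w a_i=(1-w)b_{i+1}$. Hence $(1-w)b_i<w a_i$ and $w a_{i+1}<(1-w)b_{i+1}$, so always $g_w(x^{(i)})=g_w(x^{(i+1)})=w a_i$. For plain \lotz with $n=2$ and $w=w^*_1=1/3$, both $00$ and $10$ have $g_w$-value $2/3$. So the lemma as stated is false at each interior breakpoint $w^*_1,\dots,w^*_n$. The correct version has uniqueness on $[0,w^*_1)$, on $(w^*_i,w^*_{i+1})$ for $i\in[n-1]$, and on $(w^*_n,1]$, with exactly the two neighbours optimal at each $w^*_i$.

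The paper's proof hides this in Case~2'. Mirroring Case~2, which needed $w>w^*_i$ strictly, only gives a strict win for $w<w^*_{i+1}$, and equality occurs for $y=x^{(i+1)}$. The paper also silently uses the strict monotonicity you make explicit. For non-Pareto individuals there is in fact no tie even at breakpoints: $L(x)+T(x)=n-1$ is impossible, since position $L(x)+1$ would have to be both $0$ and $1$.

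The slip is your claim $w^*_n=1$. In fact $w^*_n=b_n/(a_{n-1}+b_n)<1$ precisely because $a_{n-1}>0$. So $0<w^*_1<\dots<w^*_n<1$, the last set $(w^*_n,1]$ is nonempty, and $1^n$ is the sole optimum on all of it, by your $j<i$ bullet with $i=n$.

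This is not cosmetic. With $w^*_n=1$, the point $w=1$ would lie in the set for $i=n-1$. There your $j>i$ bullet uses $w<1$, and the claim fails, since $g_1(1^n)=0<g_1(x^{(n-1)})$. After the correction, $w\le w^*_{i+1}\le w^*_n<1$ for every $i\le n-1$, and your argument goes through.
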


\begin{proof}
  Let $i \in [0 .. n]$, let $w \in (w^*_i, w^*_{i + 1}]$, but we exclude the cases $w \in \{0, 1\}$ at the moment and prove them at the very end.
  Moreover, let $x \in \{0, 1\}^n$ be the (unique) $(i, n - i)$-individual, and let $y \in \{0 , 1\}^n \smallsetminus \{x\}$.
  Ultimately, we show via a case distinction over the structure of~$y$ that $g_w(x) < g_w(y)$, which proves the claim.
  Recall that~$L$ and~$T$ denote, respectively, the number of leading ones and trailing zeros of an individual.

  Before we conduct our case distinction, we define an important value that helps us reason for all $w' \in [0, 1]$ more easily about~$g_{w'}$.
  Following \cref{eq:update}, for all $z \in \{0, 1\}^n$, we define $b_z \coloneqq \frac{v(n) - v(T(z))}{u(n) - u(L(z)) + v(n) - v(T(z))}$ as the \emph{breaking point} for~$z$ at which the maximum of~$g_{w'}$ is taken with respect to the term over the first objective, that is, for all $w' \in [0, 1]$, we have
  \begin{equation*}
    g_{w'}(z) =
    \begin{cases}
      w' \bigl(u(n) - u(L(z))\bigr)       & \text{if } w' > b_z, \\
      (1 - w') \bigl(v(n) - v(T(z))\bigr) & \text{otherwise.}
    \end{cases}
  \end{equation*}

  \textbf{Case~1: $\boldsymbol{y}$ has strictly fewer leading ones and strictly fewer trailing zeros than~$\boldsymbol{x}$.}
  Let $j = L(y) < i$ and $k < T(y) \leq n - i$.
  Each of the two terms of the~$\max$ in~$g_w(y)$ is strictly larger than the corresponding term in~$g_w(x)$.
  That is, owing to the monotonicity of~$u$ and~$v$ and to the bound on~$j$ and~$k$, we have $w \bigl(u(n) - u(i)\bigr) < w \bigl(u(n) - u(j)\bigr)$ and $(1 - w) \bigl(v(n) - v(n - i)\bigr) < (1 - w) \bigl(v(n) - v(k)\bigr)$.
  Hence, regardless of~$w$, we have $g_w(y) > g_w(x)$.

  \textbf{Case~1': $\boldsymbol{y}$ has strictly more leading ones and strictly more trailing zeros than~$\boldsymbol{x}$.}
  This is the same case as Case~1 when swapping~$x$ and~$y$.

  \textbf{Case~2: $\boldsymbol{y}$ has strictly fewer leading ones than~$\boldsymbol{x}$ and at least as many trailing zeros as~$\boldsymbol{x}$.}
  Let $j = L(y) < i$ and $k = T(y) \geq n - i$.
  We show that $b_y \leq w^*_i$ via a case distinction with respect to whether the denominator of~$b_y$ is greater than that of~$w^*_i$.
  If $u(n) - u(j) + v(n) - v(k) \geq u(n) - u(i - 1) + v(n) - v(n - i)$, then by the monotonicity of~$v$ and by $k \geq n - i$, it follows that the numerator of~$b_y$ is at most that of~$w^*_i$, that is, $v(n) - v(k) \leq v(n) - v(n - i)$.

  If $u(n) - u(j) + v(n) - v(k) < u(n) - u(i - 1) + v(n) - v(n - i)$, then by the monotonicty of~$u$ and by $j \leq i - 1$, we have $b_y = 1 - \frac{u(n) - u(j)}{u(n) - u(j) + v(n) - v(k)} < 1 - \frac{u(n) - u(i - 1)}{u(n) - u(i - 1) + v(n) - v(n - i)} = w^*_i$.

  We continue with a case distinction with respect to~$w$, recalling that we only consider values $w > w^*_i$.
  Due to $w^*_i \geq b_y$, this implies that $w > b_y$ in the following and thus that $g_w(y) = w \bigl(u(n) - u(j)\bigr)$.

  If $w > b_x$, then $g_w(x) = w \bigl(u(n) - u(i)\bigr)$.
  Hence, owing to the monotonicity of~$u$ and to $i > j$, it follows that $g_w(y) > g_w(x)$.

  Last, if $w \leq b_x$, then $g_w(x) = (1 - w) \bigl(v(n) - v(n - i)\bigr)$.
  Note that $g_w(y) > g_w(x)$ holds if and only if $w > \frac{v(n) - v(n - i)}{u(n) - u(j) + v(n) - v(n - i)} \eqqcolon W$.
  Since $w^*_i \geq W$ and since $w > w^*_i$ by assumption, we obtain $w > W$.
  This concludes Case~2.

  \textbf{Case~2': $\boldsymbol{y}$ has strictly more leading ones than~$\boldsymbol{x}$ and at most as many trailing zeros as~$\boldsymbol{x}$.}
  This is the same case as Case~2 when swapping~$x$ and~$y$.

  \textbf{Case~3: $\boldsymbol{y}$ has exactly as many leading ones as~$\boldsymbol{x}$.}
  Since we assume that $x \neq y$ and since the last leading one of~$x$ must be followed by at least one zero and thereafter at least one one in order for~$y$ to be different from~$x$, it follows that $k \coloneqq T(y) < n - i - 1$.

  We show that $b_y > w^*_{i + 1}$ via a case distinction with respect to whether the denominator of~$b_y$ is greater than that of~$w^*_i$.
  If $u(n) - u(i) + v(n) - v(k) \leq u(n) - u(i) + v(n) - v(n - i - 1)$, then by the monotonicity of~$v$ and by $k \leq n - i - 2$, it follows that the numerator of~$b_y$ is greater than that of~$w^*_{i + 1}$, that is, $v(n) - v(k) > v(n) - v(n - i - 1)$.

  If $u(n) - u(i) + v(n) - v(k) > u(n) - u(i) + v(n) - v(n - i - 1)$, then $b_y = 1 - \frac{u(n) - u(i)}{u(n) - u(i) + v(n) - v(k)} > 1 - \frac{u(n) - u(i)}{u(n) - u(i) + v(n) - v(n - i - 1)} = w^*_{i + 1}$, as claimed.

  We continue with a case distinction with respect to~$w$, recalling that we only consider values $w \leq w^*_{i + 1}$.
  Due to $w^*_{i + 1} < b_y$, we have $w < b_y$ in the following and thus $g_w(y) = (1 - w) \bigl(v(n) - v(k)\bigr)$.

  If $w \leq b_x$, then $g_w(x) = (1 - w) \bigl(v(n) - v(n - i)\bigr)$.
  Hence, owing to the monotonicity of~$v$ and to $n - i > k$, it follows that $g_w(y) > g_w(x)$.

  Last, if $w > b_x$, then $g_w(x) = w \bigl(u(n) - u(i)\bigr)$.
  Note that $g_w(y) > g_w(x)$ holds if and only if $w < \frac{v(n) - v(k)}{u(n) - u(i) + v(n) - v(k)} = b_y$, which is true, as we discussed just above.
  This concludes Case~3.

  \textbf{The weights $\boldsymbol{w \in \{0, 1\}}$.}
  We start with $w = 0$ here, which implies that we consider~$x$ to be the $(0, n)$-individual.
  In this case, regardless of $z \in \{0, 1\}^n$, we have $g_w(z) = v(n) - v(T(z))$.
  Due to the monotonicity of~$v$, this value is~$0$ if and only if $z = x$.

  For $w = 1$, we get an analogous result for~$x$ being the $(n, 0)$-individual, as, for all $z \in \{0, 1\}^n$,we have that~$g_w$ simplifies to $u(n) - u(L(z))$, which is~$0$ if and only if $z = x$, concluding the proof.
\end{proof}

\Cref{lem:pop_sz} bounds the maximum number of subproblems possible, providing us with an upper bound of the cost of each iteration.
\begin{lemma}\label{lem:pop_sz}
  Consider the \selfAdjustingMoead optimizing \monotoneLotz.
  In each iteration, the number of subproblems is at most $3n+2$.
\end{lemma}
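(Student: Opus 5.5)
The plan is to count the \emph{distinct objective values} among the best-so-far individuals $\{x_w\}_{w\in W}$. I would show there are at most $n+1$ of them. The target bound $3n+2$ then follows from the structure of the two weight-manipulation loops.

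\textbf{Step 1 (main claim).} At every point of the run, the objective values $f(x_w)$, $w\in W$, are pairwise either equal or incomparable. For \monotoneLotz, two incomparable objective values $(u(L(x)),v(T(x)))$ and $(u(L(y)),v(T(y)))$ must have, say, $L(x)<L(y)$ and $T(x)>T(y)$. So a family of pairwise incomparable values is totally ordered by $L$ with strictly decreasing $T$. Hence it has at most $n+1$ members, as $L\in[0..n]$.

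\textbf{Step 2 (the counting).} After the removal loop in \mylineref{line:removal}, no three consecutive weights share the same objective value. I would also show that weights carrying the same objective value are consecutive in $W$. This uses the standard Chebyshev exchange argument: for a fixed $z^*$ and fixed candidates, a larger weight never prefers the candidate with smaller $f_1$ among incomparable ones. So $f_1(x_w)$ is nondecreasing in $w$.

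Consequently, after removal $|W|\le 2k$ with $k\le n+1$ the number of distinct values. The adding loop inserts one weight per consecutive pair with different values, which is at most $k-1\le n$ weights. This gives at most $2(n+1)+n=3n+2$ subproblems in the iteration. The mutation loop creates no weights, so the bound holds throughout the iteration.

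\textbf{Step 3 (proof of the main claim, expected obstacle).} Suppose $f(x_{w'})\succ f(x_w)$. Since $z^*$ weakly dominates all seen objective values, both terms of $g_w$ are weakly smaller for $x_{w'}$, so $g_w(x_{w'},z^*)\le g_w(x_w,z^*)$ for every admissible $z^*$. The difficulty is the timing.

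\begin{itemize}
\item If $x_{w'}$ was generated after $w$ existed, the update in \mylineref{line:optimalIndividuals}, with its non-strict $\le$, would have replaced $x_w$ at that moment. Later updates of $x_w$ only decrease $g_w$. Here I must also handle monotone changes of $z^*$, which I would do by a direct check that dominance-induced inequalities survive increases of $z^*$.
\item If $w$ was created later, $x_w$ is an argmin over $P$. The archive $P$ contains a value weakly dominating every value ever seen, so the argmin can be taken non-dominated among seen values. Thus $f(x_w)$ is not dominated by any earlier individual, including $x_{w'}$.
\end{itemize}

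Formalizing this as an invariant maintained through initialization, weight insertion, and each offspring update is the main work. The delicate point is the tie-breaking in the argmin over $P$ versus individuals that were generated but not archived. I would try first to prove the stronger invariant that every $f(x_w)$ lies in the set of values not strictly dominated by any individual evaluated so far. Pairwise incomparability, or equality, follows immediately from this invariant.
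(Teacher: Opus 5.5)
Your skeleton is the paper's proof. You bound the number of distinct values $f(x_w)$ by $n+1$ via pairwise incomparability, keep at most two weights per value after \mylineref{line:removal}, and add at most $n$ weights. You also correctly spot two things the paper glosses over. First, equal values must sit consecutively in $W$, which the paper uses silently. Second, non-domination among the $f(x_w)$ needs a real argument, whereas the paper just asserts that $f(x_w)$ is ``the best-possible value from the archive''. However, the invariant you plan to prove in Step~3 is false, and so is the main claim of Step~1. The plan therefore cannot be completed, and the gap is shared by the paper's own argument.

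The step that breaks is ``later updates of $x_w$ only decrease $g_w$''. There are two separate reasons.

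\textbf{Ties.} The update in \mylineref{line:optimalIndividuals} uses $\le$, and $g_w$ is a maximum of two terms, so a weak decrease is compatible with strict domination. Suppose $x_0=x_1=a$; this happens right after initialization if one initial individual has strictly more leading ones and strictly more trailing zeros than the other. Take an offspring $y$ with $L(y)=L(a)$ and $T(y)<T(a)$, for instance from flipping the last bit when $L(a)\le n-2$. Since $g_1$ only sees $f_1$, it ties, $g_1(y,z^*)=g_1(a,z^*)$, and so $y$ replaces $x_1$, while $x_0=a$ stays. Now $f(x_0)\succ f(x_1)$ and $f(x_1)\notin P$. The same happens at interior weights, even with the optimal $z^*$, whenever both individuals have the same $L$ and the term $w\,(z^*_1-f_1)$ attains the maximum for both. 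Thus distinct values with equal $f_1$ can coexist, and your count ``at most $n+1$ values, one per $L$'' loses its basis.

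\textbf{Moving $z^*$.} Before $z^*$ is optimal, the comparison that rejected an offspring was made under an older $z^*$. Preferences between \emph{incomparable} individuals can flip as $z^*$ grows, and your planned check covers only dominance-induced inequalities, which are not the ones needed. Example, realizable for suitable $u,v$:
\begin{itemize}
\item Let weights $2/5<1/2$ both hold value $(100,0)$, and let $z^*=(100,10)$ after an offspring with value $(89,10)$ is evaluated. Weight $2/5$ accepts it ($4.4<6$); weight $1/2$ rejects it ($5.5>5$).
\item An offspring with value $(0,30)$ raises $z^*$ to $(100,30)$ and is rejected by both weights.
\item An offspring with value $(88,9)$ is then accepted by weight $1/2$ ($10.5\le 15$) but rejected by weight $2/5$ ($12.6>12$).
\end{itemize}
Now $f(x_{2/5})\succ f(x_{1/2})$ and $f_1(x_{2/5})>f_1(x_{1/2})$. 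So the exchange property behind your Step~2 monotonicity also fails: that property needs each $x_w$ to be $g_w$-no-worse than every $x_{w'}$ under the \emph{current} $z^*$.

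A correct proof would need a different invariant that tolerates such tie-induced and stale values, or a modified algorithm (for example, strict replacement), and would have to handle the phase before $z^*$ is optimal separately.
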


\begin{proof}
  After the deletion step (\mylineref{line:removal} of \Cref{alg:self-adjusting-moead}), each~$f(x_w)$ appears at most twice, since we delete one occurrence of~$f(x_w)$ every time we see three consecutive ones.

  Note that the set $\{f(x_w) \mid w \in W\}$ has no element that weakly dominates another because at each time, $f(x_w)$ is the best-possible value from the archive for each $w\in W$, which implies it cannot be dominated by another $f(x_{w'})$ because $f(x_{w'})$ is in the archive. Because $f_1(x)$ can take at most $n+1$ values, we have $|\{f(x_w) \mid w\in W\}| \le n+1$. Hence after the deletion phase, $|W| \le 2 (n+1)$.

  In addition, there are at most $n$ weights that are added to~$W$ because there are at most $n$ consecutive weights in $W$ that have different values of $f(x_w)$. This gives a population size of at most $3n + 2$ after adding the weights.
\end{proof}

We split the analysis of our main result into two phases.
Phase~$1$ considers the number of iterations until the reference point is optimal.
Phase~$2$ starts immediately once phase~$1$ ends and lasts until the archive covers the entire Pareto front.

\paragraph{Phase~1.}
Since solutions are exchanged among all subproblems, the $g$-optima for weights~$0$ and~$1$ are quickly found, which implies that the reference point is optimized.

\begin{restatable}{lemma}{referencePoint}
  \label{thm:reference-point}
  Consider the \selfAdjustingMoead optimizing \monotoneLotz.
  The optimal reference point is found in $O(n^2)$ expected iterations.
\end{restatable}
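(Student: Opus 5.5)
The plan is to track the two extreme subproblems $w = 0$ and $w = 1$, which by the description of \Cref{alg:self-adjusting-moead} are never removed from~$W$. For $w = 1$ we have $g_1(x, z^*) = |z^*_1 - u(L(x))|$. We argue that $L(x_1)$ never decreases.

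Since $z^*_1 = \max$ of all first objectives seen so far, $z^*_1 \ge u(L(x_1))$ always holds, and hence $g_1(x, z^*) = z^*_1 - u(L(x))$. Consequently, the replacement rule in \mylineref{line:optimalIndividuals} only accepts offspring~$y$ with $u(L(y)) \ge u(L(x_1))$. Updates of~$z^*$ shift all $g_1$-values by the same amount, so they do not change this comparison. As a potential we therefore use $h_1 \coloneqq u(L(x_1))$ (or, with ties in~$u$ handled carefully, $L(x_1)$ itself is not needed; the value $u(L(x_1))$ suffices). This potential is nondecreasing.

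In every iteration, subproblem $w = 1$ is processed once. With probability $\Omega(1/n)$ (see the Setting paragraph), mutation flips exactly the first $0$-bit of~$x_1$. This raises $L$ by at least one while keeping the leading ones, giving $u(L(y)) \ge u(L(x_1))$, so the offspring is accepted. For strictly increasing~$u$ the potential increases strictly; otherwise we use $L(x_1)$ as the potential, which the acceptance rule preserves up to ties.

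Here lies the main obstacle. Ties in~$u$ could let $x_1$ be replaced by an individual with fewer leading ones but equal $u$-value. I plan to handle this by taking as potential the value $u(L(x_1))$ and counting only its distinct levels, of which there are at most $n+1$. From any level below $u(n)$, flipping the first zero yields an individual whose $L$ is strictly larger. Climbing to a strictly larger $u$-value may then need several such steps. To cover this, I would define the potential instead as $\max\{L(x) : u(L(x)) = u(L(x_1))\}$ restricted to the current $x_1$. Alternatively, and more simply, I would observe that the optimal reference point only requires $z^*_1 = u(n)$, so it suffices that some individual with $u(L) = u(n)$ is generated. In that case it is cleanest to assume $u,v$ strictly increasing, as intended by the paper's monotonicity, in which case the potential $L(x_1)$ strictly increases at most $n$ times.

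Applying \Cref{thm:fitnessLevelMethod} with $n$ levels and $q_i = \Omega(1/n)$ per iteration gives $O(n^2)$ expected iterations until $L(x_1) = n$, at which point $z^*_1 = u(n)$. Symmetrically, the potential $T(x_0)$ reaches~$n$ in $O(n^2)$ expected iterations, via flipping the last $1$-bit, giving $z^*_2 = v(n)$. Reference-point components never decrease, so the expected time until both hold is at most the sum, $O(n^2)$ iterations.
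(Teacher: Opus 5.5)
Your argument is correct and is essentially the paper's proof. The never-removed weights $0$ and $1$ have $g_0$ and $g_1$ depending only on $T$ and $L$, and the replacement rule makes $T(x_0)$ and $L(x_1)$ nondecreasing. Each increases with probability $\Omega(1/n)$ per iteration by flipping the last one or the first zero, so the fitness level method over at most $n$ levels each gives $O(n^2)$ iterations. The detour on ties is unnecessary, and the fixes you float there do not work: with a flat stretch of $u$, an offspring with fewer leading ones ties with $x_1$ and replaces it. The paper, like your final version, implicitly takes $u$ and $v$ strictly increasing, so state that assumption up front.
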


We prove \Cref{thm:reference-point} by showing that the individuals corresponding to weights~$0$ and~$1$ optimize functions~$g_0$ and~$g_1$ until reaching their respective optimal individuals~$1^n$ and~$0^n$.
We measure progress, respectively, in the first or in the second objective, which is monotone.
In expectation, improving such an objective takes $O(n)$ iterations, since one specific bit needs to flip.
The result follows since we require at most~$n$ improvements.
Details are deferred to the appendix.

\paragraph{Phase~2.}

In our analysis, we define fitness levels, which show how far away the algorithm is from covering the Pareto front. The higher the fitness level, the closer the algorithm is to the Pareto front. We then show that the fitness levels increase with sufficiently high probability.

\Cref{lem:progress} shows that as long as the Pareto front is not fully covered, there is always a weight $w\in W$ such that~$x_w$ is not optimal for $g_w$, leaving room for a fitness level improvement.

\begin{restatable}{lemma}{progress}
  \label{lem:progress}
  Consider the \selfAdjustingMoead optimizing \monotoneLotz, and consider an iteration in which the Pareto front is not covered.
  Then before the mutation step (\mylineref{line:mutation} in \Cref{alg:self-adjusting-moead}), there exists an individual $x_w$ that is not $g_w$-optimal.
\end{restatable}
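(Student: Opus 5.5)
We argue by contradiction: suppose that, just before the mutation step of some iteration, every $x_w$ with $w \in W$ is $g_w$-optimal, and show that then the Pareto front is covered. Phase~2 is the setting, so the reference point is $z^* = \bigl(u(n), v(n)\bigr)$ and \Cref{lem:structure_decomp} applies. For every $w\in W$, $g_w$ then has a unique optimum, namely the $(i,n-i)$-individual with $w$ in the $i$-th cell of the partition. Hence each $x_w$ is a Pareto-optimal $(i_w, n-i_w)$-individual, where $i_w$ is determined by $w$. Because $W$ always contains $0$ and $1$, and the optima of $g_0$ and $g_1$ are $0^n$ and $1^n$, we get $i_0 = 0$ and $i_1 = n$. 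The map $w\mapsto i_w$ is non-decreasing in $w$, since the cells are ordered.

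The key step uses the weight insertion of \mylineref{line:update}. Take consecutive weights $w_a<w_b$ in $W$ (after the insertion loop of this iteration) with $i_{w_a}=j<k=i_{w_b}$. If every $j\le \ell \le k$ occurs as some $i_w$, the whole front is covered, because each $f(x_w)$ lies in $P$: every $x_w$ is either taken from $P$ at insertion or accepted in \mylineref{line:optimalIndividuals}, and Pareto optima are never removed from $P$. So suppose some consecutive pair has $k\ge j+2$. The insertion loop only skips pairs with equal objective values, so at the moment the loop ran there was an inserted weight $w$ between them, computed by \cref{eq:update}. With $x_{w_a}$ the $(j,n-j)$-individual and $x_{w_b}$ the $(k,n-k)$-individual, this gives
\[
  w=\frac{v(n)-v(n-k)}{u(n)-u(j)+v(n)-v(n-k)} .
\]
By monotonicity of $u$ and $v$, $w$ lies in $(w^*_{j}, w^*_{k}]$. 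This follows from the same denominator/numerator comparisons as in Cases~2 and~3 of \Cref{lem:structure_decomp}: compare against $w^*_{j+1}$ from below and $w^*_k$ from above, using $j\le k-1$ and $n-k\le n-j-1$.

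A cleaner way to finish is via potential: inserted weights always land strictly between the cells of their neighbors. So if a gap existed, a newly inserted weight would sit in a cell $\ell$ with $j<\ell\le k$. If $\ell<k$, its $g_w$-optimal individual is a new Pareto optimum strictly between $j$ and $k$, which contradicts the assumption that consecutive weights have $i$-values differing by at least $2$ with nothing in between. I would iterate this over all gaps to conclude that all $n+1$ Pareto optima appear among $\{f(x_w)\}$ and hence in $P$, contradicting that the front is not covered.

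The main obstacle is the boundary case where the inserted $w$ falls exactly in cell $k$ (or $j$), so that it duplicates an existing optimum and the gap persists. To rule this out, I would show that \cref{eq:update} with $j\le k-2$ gives $w\le w^*_{k-1}$ strictly. This comes down to
\[
  \frac{v(n)-v(n-k)}{u(n)-u(j)+v(n)-v(n-k)}\le\frac{v(n)-v(n-k+1)}{u(n)-u(k-2)+v(n)-v(n-k+1)},
\]
and this inequality can fail for general $u$ and $v$. If it fails, I would instead note that the insertion made the pair $(w_a,w)$ or $(w,w_b)$ have unequal values whenever $x_w$ (taken from $P$ as the $\arg\min$) differs. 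In that case, the gap structure shows that some $x_w$ chosen from $P$ is not the true $g_w$-optimum, since the optimum for cell $\ell$ is missing from $P$. That is exactly the claim: $x_w=\arg\min_{x\in P} g_w$ equals the $(\ell,n-\ell)$-individual only if that individual is in $P$. I therefore expect the real proof to work this way: pick a missing Pareto optimum $\ell$, locate a weight whose cell is $\ell$ (guaranteed by the insertion rule on the gap around $\ell$), and observe that its $x_w$ cannot be the missing optimum.
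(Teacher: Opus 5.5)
Your insertion-interval step is sound (once two slips are fixed), but the final step fails, and the statement itself is false as written.

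\textbf{Two local slips.} Your ``main obstacle'' comes from an indexing slip. Cell $k$ is $(w^*_k,w^*_{k+1}]$, so you only need $w\le w^*_k$, not $w\le w^*_{k-1}$; landing in cell $k-1>j$ is harmless. With your formula, $w$ and $w^*_k$ share the numerator $v(n)-v(n-k)$, and $w$ has the larger denominator since $u(j)<u(k-1)$. Likewise $w$ and $w^*_{j+1}$ share the term $u(n)-u(j)$, and $w$ has the larger numerator since $v(n-k)<v(n-j-1)$. Hence $w\in(w^*_{j+1},w^*_k)$ for all strictly increasing $u,v$, exactly as the paper asserts. Second, run the argument on pairs that are consecutive among the surviving weights \emph{before} the insertion loop. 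A pair that is consecutive after the loop has nothing inserted between it. With that fix, your contradiction framing is neater than the paper's case split. Deletion and insertion do not touch $x_w$ for surviving weights, so your hypothesis makes them all optimal, and the paper's separate case of a deleted non-optimal weight disappears.

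\textbf{The gap.} Your final step claims the insertion rule ``guarantees'' a weight in the cell of a \emph{missing} optimum~$\ell$. It does not. It puts one weight into each gap, in some cell $\ell'$ with $j<\ell'<k$ fixed by \cref{eq:update} from the two neighbours, regardless of what $P$ contains. The $(\ell',n-\ell')$-individual may already be in $P$, since an offspring enters $P$ even when it replaces no $x_v$. Then $x_w=\mathrm{arg\,min}_{x\in P}g_w(x,z^*)$ is optimal and nothing is contradicted. ``Iterating over gaps'' only refines the decomposition across several iterations, not within one.

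The paper's proof makes the same leap when it asserts that the optimum of the new weight's cell is not in the archive. Here is a counterexample to the statement:
\begin{itemize}
\item Take $n=6$, $u=v=\mathrm{id}$ (so $w^*_i=i/7$), standard bit mutation, and initial $x_0=0^6$, $x_1=1^6$. Then $z^*=(6,6)$ from the start.
\item Iteration~1 inserts $w=1/2$. Its three offspring may be: the $(2,4)$-individual from $x_0$, which replaces $x_{1/2}$; the $(3,3)$-individual from this new $x_{1/2}$, which replaces it; and the $(4,2)$-individual from $x_1$, which enters $P$ but replaces nothing, as its $g_{1/2}$-value is $2>3/2$.
\item Iteration~2 inserts $w=1/3$ and $w=2/3$. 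Here $\mathrm{arg\,min}_P$ returns the $(2,4)$- and $(4,2)$-individuals, which are the unique $g_{1/3}$- and $g_{2/3}$-optima.
\end{itemize}
So before the mutation step all five $x_w$ are optimal, while $(1,5)$ and $(5,1)$ are missing from~$P$.

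What does hold is weaker: in such an iteration, every gap of width at least two acquires an interior optimum. The fitness-level argument built on this lemma would therefore have to be amortized over several iterations.
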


We prove \Cref{lem:progress} via a case distinction.
If a non-optimal~$x_w$ exists at the beginning of the iteration, weight~$w$ is only deleted if other individuals have matching objectives, which means that at least one other individual is non-optimal, by \Cref{lem:structure_decomp}, and remains.
If all current individuals are optimal, since the Pareto front is not covered, there exist at least two neighboring weights with differing objective.
Hence, a new weight with a non-optimal individual is added.
Details are deferred to the appendix.

Using a non-optimal individual from \Cref{lem:progress}, the current fitness level quickly increases via mutation.
\begin{lemma}
  \label{lem:fit_incr}
  Consider the \selfAdjustingMoead optimizing \monotoneLotz, and an iteration such that the Pareto front is not covered yet.
  Then the current fitness level increases by at least~$1$ in $O(n)$ expected iterations.
  Moreover, when starting with an optimal reference point, then the \selfAdjustingMoead finds the entire Pareto front in $O(n^3)$ expected iterations.
\end{lemma}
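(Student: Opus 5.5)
We plan to prove \Cref{lem:fit_incr} with the fitness level method (\Cref{thm:fitnessLevelMethod}), using \Cref{lem:progress} to guarantee that an improvable subproblem always exists. First fix the potential. For the first claim, the \emph{current fitness level} is the lexicographic pair $\bigl(|P \cap F(\monotoneLotz)|, \ell\bigr)$. Here~$\ell \in [0 .. n]$ measures how far the best non-$g$-optimal best-so-far individual has progressed toward its $g_w$-optimum (defined below). This gives $O(n^2)$ levels in total. For the second claim we assume $z^* = \bigl(u(n), v(n)\bigr)$, so~$z^*$ never changes again. Then each value $g_w(x_w, z^*)$ is non-increasing over time by \mylineref{line:optimalIndividuals}. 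Also, the archive never loses a Pareto optimum, since Pareto optima are never weakly dominated by a different objective value. Hence the first component never decreases.

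\textbf{Step 1: one specific bit flip improves some $g_w$.} By \Cref{lem:progress}, before the mutation step there is a $w \in W$ such that~$x_w$ is not $g_w$-optimal. Since every $w \in W$ is mutated in each iteration (\mylineref{line:mutation}), it suffices to find one specific bit whose sole flip strictly decreases $g_w(x_w, z^*)$. That event has probability $\Omega(1/n)$ under either mutation operator. We distinguish two cases.
\begin{itemize}
  \item If~$x_w$ is not a Pareto optimum, then $L(x_w) + T(x_w) < n$. Flipping the first~$0$ after the leading ones strictly increases~$L$ without decreasing~$T$, and flipping the last~$1$ before the trailing zeros does the symmetric thing. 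We choose the flip that lowers the term attaining the maximum in $g_w$. That term then strictly decreases and the other does not increase, so $g_w$ strictly decreases.
  \item If~$x_w$ is the $(k, n-k)$-individual but $g_w$ is uniquely optimized by the $(i, n-i)$-individual with $i \neq k$ (\Cref{lem:structure_decomp}), we flip bit $k+1$ or bit~$k$ to move to the $(k \pm 1)$-individual in the direction of~$i$. Using the breaking points $b_z$ from the proof of \Cref{lem:structure_decomp}, we show that $g_w$ restricted to the Pareto front is strictly decreasing toward index~$i$. The reason is that for $j < i$ the maximum is attained by the first-objective term, which decreases in~$j$, and symmetrically for $j > i$.
\end{itemize}
Either way, the offspring strictly improves~$g_w$. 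It is also inserted into the archive if it is a new Pareto optimum.

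\textbf{Step 2: monotonicity of the potential.} The main obstacle is that~$W$ changes: weights are removed in \mylineref{line:removal} and added in \mylineref{line:update}. So ``the progress of the non-optimal individual'' must be defined so that it never drops.
\begin{itemize}
  \item For non-Pareto-optimal individuals, we define~$\ell$ via $L + T$ of the relevant best-so-far individuals. The population only ever contains mutually incomparable archive-optimal values (proof of \Cref{lem:pop_sz}), and a new weight's individual is chosen as the $\arg\min$ over~$P$. So a non-Pareto individual can only be replaced by one at least as good in the relevant objective.
  \item For Pareto-optimal but non-$g_w$-optimal individuals, I expect the simplest route is different. Any step that moves such an individual to a new Pareto point adds a new element to $P \cap F$, increasing the first component. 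So only progress on non-Pareto individuals needs the second component.
\end{itemize}
I would try first to show that once all current~$x_w$ lie on the Pareto front, every improving step of Step~1 lands on an uncovered Pareto point. This follows since a newly inserted weight lies strictly between two adjacent distinct Pareto points by \cref{eq:update}. Hence the level strictly increases.

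\textbf{Step 3: conclude.} Step~1 gives a level increase with probability $\Omega(1/n)$ per iteration, i.e., within $O(n)$ expected iterations, which is the first claim. The second claim follows from \Cref{thm:fitnessLevelMethod}: with $O(n^2)$ levels (the first component takes $n+2$ values and~$\ell$ takes $n+1$), we obtain $O(n^2) \cdot O(n) = O(n^3)$ expected iterations until $P \supseteq F(\monotoneLotz)$.
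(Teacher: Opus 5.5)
Your Step~1 is essentially the paper's argument, but Step~2 has a genuine gap: the second component~$\ell$ of your lexicographic potential is never defined, and the natural readings of it are not monotone. Suppose, for instance, that $\ell$ is the largest value of $L+T$ among the non-Pareto best-so-far individuals. The update in \mylineref{line:optimalIndividuals} accepts an offspring~$y$ for~$x_w$ as soon as $g_w(y,z^*)\le g_w(x_w,z^*)$. So when the first term of the maximum is the active one, a non-Pareto offspring with more leading ones but fewer trailing zeros can strictly improve~$g_w$ while lowering $L+T$; for suitable~$w$, $(L,T)$ can go from $(5,3)$ to $(6,1)$. No new Pareto optimum need appear in that step. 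Your remark that an individual is only replaced by one at least as good ``in the relevant objective'' controls only the active term, not $L+T$. Moreover, weights are inserted and deleted in every iteration, so ``the best non-$g$-optimal individual'' is not a stable object to track. Likewise, for a Pareto-optimal but non-$g_w$-optimal~$x_w$ you need the neighbouring Pareto point to be \emph{uncovered}. Your appeal to \cref{eq:update} only concerns freshly inserted weights, whereas such an~$x_w$ may belong to an older weight.

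The missing idea, which the paper's proof rests on, is an invariant together with a simpler potential. With $z^*=\bigl(u(n),v(n)\bigr)$ fixed, each~$x_w$ minimizes $g_w(\cdot,z^*)$ over the archive: it is the $\arg\min$ over~$P$ when~$w$ is inserted, and afterwards it is compared with every offspring. Also, $g_w$ is monotone with respect to weak domination. Hence an offspring~$y$ with $g_w(y,z^*)<g_w(x_w,z^*)$ cannot be weakly dominated by any element of~$P$. The paper therefore takes as fitness level the number of objective values weakly dominated by~$P$. This is automatically non-decreasing, since the archive discards only values weakly dominated by a newly added one, and it is at most $(n+1)^2$. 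It also records progress of Pareto and non-Pareto individuals alike: every strictly improving offspring from your Step~1 raises it by at least one. Combined with \Cref{lem:progress}, this gives a geometric waiting time with success probability $\Omega(1/n)$, hence $O(n)$ iterations per level and $O(n^3)$ in total. No second component is needed, and no subproblem has to be tracked through insertions and deletions.
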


\begin{proof}
  We define the fitness level at the beginning of an iteration of the \selfAdjustingMoead as the number $k \in \N$ of values of the objective function~$f$ that are weakly dominated by the archive.
  Note that $k$ is always positive and at most $(n+1)^2$ as the pair $(L(x), T(x))$ can take at most $(n+1)^2$ values, which is also an upper bound for $f(x)$.
  Moreover,~$k$ cannot decrease during the run of the algorithm. Last, if the fitness level is maximal, then the Pareto front has been found.

  We show that it takes $O(n)$ expected iterations to increase each fitness level, corresponding to $O(n^3)$ expected iterations to reach a maximal fitness level, at which point the whole Pareto is covered.

  If the Pareto front is not fully covered, by \Cref{lem:progress}, at every mutation phase, there is at least one $w$ such that $x_w$ is not optimal for $g_w$. This means that with probability $\Omega(1/n)$, individual~$x_w$ is mutated into $y$ such that $g_w(y) < g_w(x_w)$, as it suffices to either increase the number of leading ones or the number of trailing zeros of~$x_w$ by one (which can be done with probability $\Omega(1/n)$). Furthermore, $y$ cannot be weakly dominated by an individual in the archive because otherwise we would have $x_w$ equal to something better from the archive, which is impossible. Hence, we add $y$ to the archive, and it weakly dominates itself, which was not the case before. Now if $X$ is the random variable giving the number of iterations before the fitness level increases, $X$ follows a geometric distribution of parameter $p = \Omega(1/n)$, resulting in $E[X] = O(n)$. Consequently, the number of weakly dominated elements increases by at least one in $O(n)$ expected iterations of the algorithm as required.
\end{proof}

\paragraph{Combining both phases.}
Combining the results of both phases and accounting for the maximum cost of an iteration by \Cref{lem:pop_sz} yields immediately our main result.

\begin{proof}[Proof of \Cref{thm:lotz-run-time}]
  By \Cref{thm:reference-point}, the first phase runs in $O(n^2)$ iterations, and from the population size being $O(n)$ (\Cref{lem:pop_sz}), we have a runtime of $O(n^3)$ function evaluations by multiplying the number of iterations by the upper bound on the number of individuals mutated per iteration.

  By \Cref{lem:fit_incr}, the second phase runs in $O(n^3)$ iterations, and from the population size being $O(n)$ (\Cref{lem:pop_sz}), we have a runtime of $O(n^4)$ function evaluations.

  We obtain a total runtime of $O(n^4)$ function evaluations by summing the runtime of both phases.
\end{proof}

\section{Conclusion}
\label{sec:conclusion}

We developed a new way how the \moead can automatically adjust the number of subproblems and the weight vectors defining them. Compared to previous such algorithms, ours is simple enough to admit proven performance guarantees (which are, in fact, the first proven performance guarantees for a dynamic multi-objective evolutionary algorithm).

We proved that our \selfAdjustingMoead computes the Pareto front of any monotonically distorted \lotz functions in an expected number of $O(n^4)$ function evaluations, clearly overcoming the difficulties detected in~\cite{LiZZZ16}.

In terms of generalization, we believe that a decomposition similar to the one presented in \Cref{lem:structure_decomp} is also possible for other established theory benchmarks, such as \textsc{OneMinMax}~\cite{GielL10} and \textsc{OneJumpZeroJump}~\cite{ZhengD23ecj}.
Moreover, we ran experiments for $0$/$1$-\textsc{Knapsack} instances, which we report in \Cref{sec:knapsack-experiments}.
In these experiments, we compare the \selfAdjustingMoead to the classic \moead with tuned parameters, finding that the former is at least as good if not better than the latter while requiring almost no parameter tuning.
This suggests that the idea of the \selfAdjustingMoead translates well to settings that go beyond the theoretical guarantees present in this work.

In this first work, we restricted ourselves to bi-objective pseudo-Boolean optimization. Natural next steps, in order of increasing need for new ideas, are (1)~the use of our algorithm for discrete problems on search spaces other than bit strings, (2)~the extension of our algorithm to more than two objectives (this in particular needs the definition of a suitable way to insert new subproblems), and (3)~possibly an extension to continuous problems, which usually operate with a fixed number of subproblems.

\subsection*{Acknowledgments}

Martin S. Krejca is supported by the French National Agency for Research (ANR) via the JCJC grant titled MultiEDA (ANR-25-CE23-2418-01).
This research also benefited from the support of the FMJH Program Gaspard Monge for optimization and operations research and their interactions with data science.

\bibliographystyle{alpha}
\bibliography{ich_master,alles_ea_master,rest}

\newpage
\section*{NeurIPS Paper Checklist}

\begin{enumerate}

      \item {\bf Claims}
      \item[] Question: Do the main claims made in the abstract and introduction accurately reflect the paper's contributions and scope?
      \item[] Answer: \answerYes{}
      \item[] Justification: The contribution in the introduction, which is clearly highlighted, provides explicit references to all the results in the main body of the paper. Moreover, the entire introduction relates our results with the state of the art.
      \item[] Guidelines:
            \begin{itemize}
                  \item The answer \answerNA{} means that the abstract and introduction do not include the claims made in the paper.
                  \item The abstract and/or introduction should clearly state the claims made, including the contributions made in the paper and important assumptions and limitations. A \answerNo{} or \answerNA{} answer to this question will not be perceived well by the reviewers.
                  \item The claims made should match theoretical and experimental results, and reflect how much the results can be expected to generalize to other settings.
                  \item It is fine to include aspirational goals as motivation as long as it is clear that these goals are not attained by the paper.
            \end{itemize}

      \item {\bf Limitations}
      \item[] Question: Does the paper discuss the limitations of the work performed by the authors?
      \item[] Answer: \answerYes{}
      \item[] Justification: We discuss limitations in the introduction, at the end of the discussion of the sMOEA/D (\Cref{sec:algorithms:self-adjusting-moead}), directly after stating our main result (\Cref{thm:lotz-run-time}), and in the conclusion.
      \item[] Guidelines:
            \begin{itemize}
                  \item The answer \answerNA{} means that the paper has no limitation while the answer \answerNo{} means that the paper has limitations, but those are not discussed in the paper.
                  \item The authors are encouraged to create a separate ``Limitations'' section in their paper.
                  \item The paper should point out any strong assumptions and how robust the results are to violations of these assumptions (e.g., independence assumptions, noiseless settings, model well-specification, asymptotic approximations only holding locally). The authors should reflect on how these assumptions might be violated in practice and what the implications would be.
                  \item The authors should reflect on the scope of the claims made, e.g., if the approach was only tested on a few datasets or with a few runs. In general, empirical results often depend on implicit assumptions, which should be articulated.
                  \item The authors should reflect on the factors that influence the performance of the approach. For example, a facial recognition algorithm may perform poorly when image resolution is low or images are taken in low lighting. Or a speech-to-text system might not be used reliably to provide closed captions for online lectures because it fails to handle technical jargon.
                  \item The authors should discuss the computational efficiency of the proposed algorithms and how they scale with dataset size.
                  \item If applicable, the authors should discuss possible limitations of their approach to address problems of privacy and fairness.
                  \item While the authors might fear that complete honesty about limitations might be used by reviewers as grounds for rejection, a worse outcome might be that reviewers discover limitations that aren't acknowledged in the paper. The authors should use their best judgment and recognize that individual actions in favor of transparency play an important role in developing norms that preserve the integrity of the community. Reviewers will be specifically instructed to not penalize honesty concerning limitations.
            \end{itemize}

      \item {\bf Theory assumptions and proofs}
      \item[] Question: For each theoretical result, does the paper provide the full set of assumptions and a complete (and correct) proof?
      \item[] Answer: \answerYes{}
      \item[] Justification: Each mathematical claim is rigorously stated, and its proof is given in the paper.
            We note that the proofs of \Cref{thm:reference-point,lem:progress} are, due to space constraints, deferred to the appendix, but we still provide proof sketches in the main paper.
      \item[] Guidelines:
            \begin{itemize}
                  \item The answer \answerNA{} means that the paper does not include theoretical results.
                  \item All the theorems, formulas, and proofs in the paper should be numbered and cross-referenced.
                  \item All assumptions should be clearly stated or referenced in the statement of any theorems.
                  \item The proofs can either appear in the main paper or the supplemental material, but if they appear in the supplemental material, the authors are encouraged to provide a short proof sketch to provide intuition.
                  \item Inversely, any informal proof provided in the core of the paper should be complemented by formal proofs provided in appendix or supplemental material.
                  \item Theorems and Lemmas that the proof relies upon should be properly referenced.
            \end{itemize}

      \item {\bf Experimental result reproducibility}
      \item[] Question: Does the paper fully disclose all the information needed to reproduce the main experimental results of the paper to the extent that it affects the main claims and/or conclusions of the paper (regardless of whether the code and data are provided or not)?
      \item[] Answer: \answerNA{}
      \item[] Justification: This paper does not contain any experiments.
      \item[] Guidelines:
            \begin{itemize}
                  \item The answer \answerNA{} means that the paper does not include experiments.
                  \item If the paper includes experiments, a \answerNo{} answer to this question will not be perceived well by the reviewers: Making the paper reproducible is important, regardless of whether the code and data are provided or not.
                  \item If the contribution is a dataset and\slash or model, the authors should describe the steps taken to make their results reproducible or verifiable.
                  \item Depending on the contribution, reproducibility can be accomplished in various ways. For example, if the contribution is a novel architecture, describing the architecture fully might suffice, or if the contribution is a specific model and empirical evaluation, it may be necessary to either make it possible for others to replicate the model with the same dataset, or provide access to the model. In general. releasing code and data is often one good way to accomplish this, but reproducibility can also be provided via detailed instructions for how to replicate the results, access to a hosted model (e.g., in the case of a large language model), releasing of a model checkpoint, or other means that are appropriate to the research performed.
                  \item While NeurIPS does not require releasing code, the conference does require all submissions to provide some reasonable avenue for reproducibility, which may depend on the nature of the contribution. For example
                        \begin{enumerate}
                              \item If the contribution is primarily a new algorithm, the paper should make it clear how to reproduce that algorithm.
                              \item If the contribution is primarily a new model architecture, the paper should describe the architecture clearly and fully.
                              \item If the contribution is a new model (e.g., a large language model), then there should either be a way to access this model for reproducing the results or a way to reproduce the model (e.g., with an open-source dataset or instructions for how to construct the dataset).
                              \item We recognize that reproducibility may be tricky in some cases, in which case authors are welcome to describe the particular way they provide for reproducibility. In the case of closed-source models, it may be that access to the model is limited in some way (e.g., to registered users), but it should be possible for other researchers to have some path to reproducing or verifying the results.
                        \end{enumerate}
            \end{itemize}

      \item {\bf Open access to data and code}
      \item[] Question: Does the paper provide open access to the data and code, with sufficient instructions to faithfully reproduce the main experimental results, as described in supplemental material?
      \item[] Answer: \answerNA{}
      \item[] Justification: This paper does not contain any experiments.
      \item[] Guidelines:
            \begin{itemize}
                  \item The answer \answerNA{} means that paper does not include experiments requiring code.
                  \item Please see the NeurIPS code and data submission guidelines (\url{https://neurips.cc/public/guides/CodeSubmissionPolicy}) for more details.
                  \item While we encourage the release of code and data, we understand that this might not be possible, so \answerNo{} is an acceptable answer. Papers cannot be rejected simply for not including code, unless this is central to the contribution (e.g., for a new open-source benchmark).
                  \item The instructions should contain the exact command and environment needed to run to reproduce the results. See the NeurIPS code and data submission guidelines (\url{https://neurips.cc/public/guides/CodeSubmissionPolicy}) for more details.
                  \item The authors should provide instructions on data access and preparation, including how to access the raw data, preprocessed data, intermediate data, and generated data, etc.
                  \item The authors should provide scripts to reproduce all experimental results for the new proposed method and baselines. If only a subset of experiments are reproducible, they should state which ones are omitted from the script and why.
                  \item At submission time, to preserve anonymity, the authors should release anonymized versions (if applicable).
                  \item Providing as much information as possible in supplemental material (appended to the paper) is recommended, but including URLs to data and code is permitted.
            \end{itemize}

      \item {\bf Experimental setting/details}
      \item[] Question: Does the paper specify all the training and test details (e.g., data splits, hyperparameters, how they were chosen, type of optimizer) necessary to understand the results?
      \item[] Answer: \answerNA{}
      \item[] Justification: This paper does not contain any experiments.
      \item[] Guidelines:
            \begin{itemize}
                  \item The answer \answerNA{} means that the paper does not include experiments.
                  \item The experimental setting should be presented in the core of the paper to a level of detail that is necessary to appreciate the results and make sense of them.
                  \item The full details can be provided either with the code, in appendix, or as supplemental material.
            \end{itemize}

      \item {\bf Experiment statistical significance}
      \item[] Question: Does the paper report error bars suitably and correctly defined or other appropriate information about the statistical significance of the experiments?
      \item[] Answer: \answerNA{}
      \item[] Justification: This paper does not contain any experiments.
      \item[] Guidelines:
            \begin{itemize}
                  \item The answer \answerNA{} means that the paper does not include experiments.
                  \item The authors should answer \answerYes{} if the results are accompanied by error bars, confidence intervals, or statistical significance tests, at least for the experiments that support the main claims of the paper.
                  \item The factors of variability that the error bars are capturing should be clearly stated (for example, train/test split, initialization, random drawing of some parameter, or overall run with given experimental conditions).
                  \item The method for calculating the error bars should be explained (closed form formula, call to a library function, bootstrap, etc.)
                  \item The assumptions made should be given (e.g., Normally distributed errors).
                  \item It should be clear whether the error bar is the standard deviation or the standard error of the mean.
                  \item It is OK to report 1-sigma error bars, but one should state it. The authors should preferably report a 2-sigma error bar than state that they have a 96\% CI, if the hypothesis of Normality of errors is not verified.
                  \item For asymmetric distributions, the authors should be careful not to show in tables or figures symmetric error bars that would yield results that are out of range (e.g., negative error rates).
                  \item If error bars are reported in tables or plots, the authors should explain in the text how they were calculated and reference the corresponding figures or tables in the text.
            \end{itemize}

      \item {\bf Experiments compute resources}
      \item[] Question: For each experiment, does the paper provide sufficient information on the computer resources (type of compute workers, memory, time of execution) needed to reproduce the experiments?
      \item[] Answer: \answerNA{}
      \item[] Justification: This paper does not contain any experiments.
      \item[] Guidelines:
            \begin{itemize}
                  \item The answer \answerNA{} means that the paper does not include experiments.
                  \item The paper should indicate the type of compute workers CPU or GPU, internal cluster, or cloud provider, including relevant memory and storage.
                  \item The paper should provide the amount of compute required for each of the individual experimental runs as well as estimate the total compute.
                  \item The paper should disclose whether the full research project required more compute than the experiments reported in the paper (e.g., preliminary or failed experiments that didn't make it into the paper).
            \end{itemize}

      \item {\bf Code of ethics}
      \item[] Question: Does the research conducted in the paper conform, in every respect, with the NeurIPS Code of Ethics \url{https://neurips.cc/public/EthicsGuidelines}?
      \item[] Answer: \answerYes{}
      \item[] Justification: We reviewed the NeurIPS Code of Ethics carefully and concluded that none of our results or assumptions violate the code.
      \item[] Guidelines:
            \begin{itemize}
                  \item The answer \answerNA{} means that the authors have not reviewed the NeurIPS Code of Ethics.
                  \item If the authors answer \answerNo, they should explain the special circumstances that require a deviation from the Code of Ethics.
                  \item The authors should make sure to preserve anonymity (e.g., if there is a special consideration due to laws or regulations in their jurisdiction).
            \end{itemize}

      \item {\bf Broader impacts}
      \item[] Question: Does the paper discuss both potential positive societal impacts and negative societal impacts of the work performed?
      \item[] Answer: \answerNA{}
      \item[] Justification: Our main contribution provides a robust alternative to existing general-purpose optimization heuristics.
            As such, the efficiency with which certain results can be obtained may change, but our contribution does not allow approaching problems that could not be approached before.
      \item[] Guidelines:
            \begin{itemize}
                  \item The answer \answerNA{} means that there is no societal impact of the work performed.
                  \item If the authors answer \answerNA{} or \answerNo, they should explain why their work has no societal impact or why the paper does not address societal impact.
                  \item Examples of negative societal impacts include potential malicious or unintended uses (e.g., disinformation, generating fake profiles, surveillance), fairness considerations (e.g., deployment of technologies that could make decisions that unfairly impact specific groups), privacy considerations, and security considerations.
                  \item The conference expects that many papers will be foundational research and not tied to particular applications, let alone deployments. However, if there is a direct path to any negative applications, the authors should point it out. For example, it is legitimate to point out that an improvement in the quality of generative models could be used to generate Deepfakes for disinformation. On the other hand, it is not needed to point out that a generic algorithm for optimizing neural networks could enable people to train models that generate Deepfakes faster.
                  \item The authors should consider possible harms that could arise when the technology is being used as intended and functioning correctly, harms that could arise when the technology is being used as intended but gives incorrect results, and harms following from (intentional or unintentional) misuse of the technology.
                  \item If there are negative societal impacts, the authors could also discuss possible mitigation strategies (e.g., gated release of models, providing defenses in addition to attacks, mechanisms for monitoring misuse, mechanisms to monitor how a system learns from feedback over time, improving the efficiency and accessibility of ML).
            \end{itemize}

      \item {\bf Safeguards}
      \item[] Question: Does the paper describe safeguards that have been put in place for responsible release of data or models that have a high risk for misuse (e.g., pre-trained language models, image generators, or scraped datasets)?
      \item[] Answer: \answerNA{}
      \item[] Justification: This paper does not deal with data or models.
      \item[] Guidelines:
            \begin{itemize}
                  \item The answer \answerNA{} means that the paper poses no such risks.
                  \item Released models that have a high risk for misuse or dual-use should be released with necessary safeguards to allow for controlled use of the model, for example by requiring that users adhere to usage guidelines or restrictions to access the model or implementing safety filters.
                  \item Datasets that have been scraped from the Internet could pose safety risks. The authors should describe how they avoided releasing unsafe images.
                  \item We recognize that providing effective safeguards is challenging, and many papers do not require this, but we encourage authors to take this into account and make a best faith effort.
            \end{itemize}

      \item {\bf Licenses for existing assets}
      \item[] Question: Are the creators or original owners of assets (e.g., code, data, models), used in the paper, properly credited and are the license and terms of use explicitly mentioned and properly respected?
      \item[] Answer: \answerNA{}
      \item[] Justification: This paper does not use existing assets.
      \item[] Guidelines:
            \begin{itemize}
                  \item The answer \answerNA{} means that the paper does not use existing assets.
                  \item The authors should cite the original paper that produced the code package or dataset.
                  \item The authors should state which version of the asset is used and, if possible, include a URL.
                  \item The name of the license (e.g., CC-BY 4.0) should be included for each asset.
                  \item For scraped data from a particular source (e.g., website), the copyright and terms of service of that source should be provided.
                  \item If assets are released, the license, copyright information, and terms of use in the package should be provided. For popular datasets, \url{paperswithcode.com/datasets} has curated licenses for some datasets. Their licensing guide can help determine the license of a dataset.
                  \item For existing datasets that are re-packaged, both the original license and the license of the derived asset (if it has changed) should be provided.
                  \item If this information is not available online, the authors are encouraged to reach out to the asset's creators.
            \end{itemize}

      \item {\bf New assets}
      \item[] Question: Are new assets introduced in the paper well documented and is the documentation provided alongside the assets?
      \item[] Answer: \answerNA{}
      \item[] Justification: We do not provide new assets, but we provide pseudocode for our newly introduced algorithm variant (\Cref{alg:self-adjusting-moead}), and we discuss its design and motivation (\Cref{sec:algorithms:self-adjusting-moead}).
      \item[] Guidelines:
            \begin{itemize}
                  \item The answer \answerNA{} means that the paper does not release new assets.
                  \item Researchers should communicate the details of the dataset\slash code\slash model as part of their submissions via structured templates. This includes details about training, license, limitations, etc.
                  \item The paper should discuss whether and how consent was obtained from people whose asset is used.
                  \item At submission time, remember to anonymize your assets (if applicable). You can either create an anonymized URL or include an anonymized zip file.
            \end{itemize}

      \item {\bf Crowdsourcing and research with human subjects}
      \item[] Question: For crowdsourcing experiments and research with human subjects, does the paper include the full text of instructions given to participants and screenshots, if applicable, as well as details about compensation (if any)?
      \item[] Answer: \answerNA{}
      \item[] Justification: This paper neither involves crowdsourcing nor research with human subjects.
      \item[] Guidelines:
            \begin{itemize}
                  \item The answer \answerNA{} means that the paper does not involve crowdsourcing nor research with human subjects.
                  \item Including this information in the supplemental material is fine, but if the main contribution of the paper involves human subjects, then as much detail as possible should be included in the main paper.
                  \item According to the NeurIPS Code of Ethics, workers involved in data collection, curation, or other labor should be paid at least the minimum wage in the country of the data collector.
            \end{itemize}

      \item {\bf Institutional review board (IRB) approvals or equivalent for research with human subjects}
      \item[] Question: Does the paper describe potential risks incurred by study participants, whether such risks were disclosed to the subjects, and whether Institutional Review Board (IRB) approvals (or an equivalent approval/review based on the requirements of your country or institution) were obtained?
      \item[] Answer: \answerNA{}
      \item[] Justification: This paper neither involves crowdsourcing nor research with human subjects.
      \item[] Guidelines:
            \begin{itemize}
                  \item The answer \answerNA{} means that the paper does not involve crowdsourcing nor research with human subjects.
                  \item Depending on the country in which research is conducted, IRB approval (or equivalent) may be required for any human subjects research. If you obtained IRB approval, you should clearly state this in the paper.
                  \item We recognize that the procedures for this may vary significantly between institutions and locations, and we expect authors to adhere to the NeurIPS Code of Ethics and the guidelines for their institution.
                  \item For initial submissions, do not include any information that would break anonymity (if applicable), such as the institution conducting the review.
            \end{itemize}

      \item {\bf Declaration of LLM usage}
      \item[] Question: Does the paper describe the usage of LLMs if it is an important, original, or non-standard component of the core methods in this research? Note that if the LLM is used only for writing, editing, or formatting purposes and does \emph{not} impact the core methodology, scientific rigor, or originality of the research, declaration is not required.
      \item[] Answer: \answerNA{}
      \item[] Justification: The research conducted in this paper does not involve LLMs.
      \item[] Guidelines:
            \begin{itemize}
                  \item The answer \answerNA{} means that the core method development in this research does not involve LLMs as any important, original, or non-standard components.
                  \item Please refer to our LLM policy in the NeurIPS handbook for what should or should not be described.
            \end{itemize}

\end{enumerate}

\newpage
\appendix
\section{Appendix for ``A Dynamic Decomposition Strategy for the MOEA/D With Proven Performance Guarantees''}

This appendix contains the proofs of \Cref{thm:reference-point,lem:progress}.
For the sake of convenience, we repeat the lemmas before we state their proof.
Moreover, \Cref{sec:experiments} details empirical results.

\referencePoint*

\begin{proof}
  First, by \emph{optimum of a weight $w$}, we mean an individual $x$ that minimizes $g_w(x, z^*)$.

  In the first phase, the individuals corresponding to weights~$0$ and~$1$ optimize functions $g_0$ and $g_1$ until reaching their respective optimal individuals $1^n$ and $0^n$, thus finding the reference point $z^* = (n, n)$.

  Notice that weights~$0$ and~$1$ are in $W$ at the beginning and are always in $W$ throughout the execution of the algorithm. Moreover, $x_0$ optimizes $g_0\colon (x, z^*)\mapsto z^*_2 - v(T(x_0))$ while~$x_1$ optimizes $g_1\colon (x, z^*)\mapsto z^*_1 - u(L(x_1))$.

  Notice that all mutations that increase $T(x_0)$ (resp. $L(x_1)$) decrease the objective function of $x_0$ (resp. $x_1)$. Hence, the resulting offspring replaces $x_0$ (resp. $x_1$). All mutations that decrease $T(x_0)$ (resp. $L(x_1)$) have a negative impact on their respective objective functions and, thus, do not result in the offspring replacing $x_0$ (resp. $x_1$). It follows that $T(x_0)$ and $L(x_1)$ can only increase. They increase with probability $\Omega(1/n)$ because the number of leading ones or number of leading zeros is increased by at least one by flipping the first zero or the last one respectively, happening with probability $\Omega(1/n)$ with standard bit mutation. It follows from the fitness level method that in $O(n^2)$ expected iterations of the algorithm, $z^*$ is found.
\end{proof}

\progress*

\begin{proof}
  After the first phase, for all $w\in W$, the best-so-far individual $x_w$ optimizes the function $g_w\colon x\mapsto \max\bigl\{w \bigl(u(n) - u(L(x))\bigr), (1 - w) \bigl(v(n) - v(T(x))\bigr)\bigr\}$ because $z^* = (u(n), v(n))$.
  We omit $z^*$ from the parameter of the function, as it is optimal and thus does not change.

  First, suppose at the beginning of an iteration there is such a sub-optimal $x_w$. Then there is a weight $w$ such that $x_w$ is not $g_w$-optimal. Now suppose $w$ is deleted (which occurs before the mutation), then there are two weights $w_1 < w < w_2$ with $f(x_{w_1}) = f(x_{w_2}) = f(x_w)$ that are not deleted this iteration. Suppose for the sake of contradiction that $x_{w_1}$ and $x_{w_2}$ are both optimal, then from the interval structure of the $g$-optima we showed in \Cref{lem:structure_decomp}, it follows that $x_w$ is also optimal for $g_w$, which is a contradiction. Thus at least one of~$x_{w_1}$ and~$x_{w_2}$ is not optimal, showing the claim.

  Now suppose that for all $w\in W$, $x_w$ is $g_w$-optimal. Because not the whole Pareto front is covered and the population consists of weights $0$ and $1$, there exist two consecutive weights in the population with a number of leading ones that differ by at least 2. Let $w_1$ and $w_2$ be those weights.
  These weights fall each into one of the intervals of the partition of \Cref{lem:structure_decomp}.
  That is, using the notation of \Cref{lem:structure_decomp}, there exist $a, b\le n$ with $b\ge a+2$ such that $w_1 \in (w_a, w_{a + 1}]$ (or $w_1 = 0$) and $w_2\in (w_b, w_{b + 1}]$. Thus, $f(w_1) \neq f(w_2)$, which means that the \selfAdjustingMoead adds a weight~$w$ between $w_1$ and $w_2$. This implies that $w \notin (w_a, w_{a + 1}]$ (and $w \neq 0$) as~$x_{w_1}$ has a strictly smaller~$g_w$-value than~$x_{w_2}$ in this interval, due to~$x_{w_1}$ being the unique optimum in this interval, according to \Cref{lem:structure_decomp}, and as the \selfAdjustingMoead chooses~$w$ such that the values of~$g_w(x_{w_1})$ and~$g_w(x_{w_2})$ agree. Likewise, we have $w \notin (w_b, w_{b + 1}]$. Hence, $w\in (w_{a + 1}, w_b]$ which means $g_w$-optimal individuals are not in the archive as elements of the Pareto front (which is the case of $g_w$-optimal individuals). Consequently, the individual corresponding to this new weight $w$ is not $g_w$-optimal before the mutation phase.
\end{proof}

\section{Empirical Results}
\label{sec:experiments}

This appendix contains empirical results for the \selfAdjustingMoead that the reviewers asked for.
We note that these results were not peer-reviewed.
The code is available under \url{https://doi.org/10.5281/zenodo.23570076}.

\subsection{Empirical Analysis on $0$/$1$-\textsc{Knapsack}}
\label{sec:knapsack-experiments}

We complement our theoretical results from \Cref{sec:theory} by experiments that analyze the \selfAdjustingMoead in aspects not present in the theoretical analysis.
To this end, we follow the experimental setup of Zhang and Li~\cite{ZhangL07} and consider random instances of the NP-hard bi-objective $0$/$1$-\textsc{Knapsack} problem.
Zhang and Li~\cite{ZhangL07} used this setup when introducing the classic \moead, and they showed that it empirically outperforms another common multi-objective search heuristic known as GSEMO.

In total, we compare the performance of the \selfAdjustingMoead to that of the classic \moead, and we analyze the number of subproblems of the former during the run.
Our results show that the archive of the \selfAdjustingMoead is superior to that of the classic \moead.
Moreover, the number of the subproblems of the \selfAdjustingMoead grows over time, suggesting that less time is spent on improving bad solutions, and more time is spent on exploring the Pareto front.
We detail our experimental setup in \Cref{sec:experiments:setup}, and we discuss our results in \Cref{sec:experiments:results}.

We also conducted experiments for the \selfAdjustingMoead and the classic \moead on random \monotoneLotz instances, but we detail this analysis only in \Cref{sec:lotz-experiments}.
Our results support our claims from \Cref{sec:theory}.
That is, they suggest that the \selfAdjustingMoead optimizes \monotoneLotz in reasonable time by finding new elements of the Pareto front one by one.
The classic \moead gets to the Pareto front quickly but does not find all Pareto optima within reasonable time.

\subsubsection{Setup}
\label{sec:experiments:setup}


\paragraph{Objective function.}
We consider the bi-objective \zeroOneKnapsack, which is an NP-hard combinatorial maximization problem.
The objective is to select a certain number of at most $n \in \N_{\geq 1}$ items and to place them into two bags, where the profit per item may differ per bag.
Moreover, this problem is subject to capacity constraints, as each item has a weight and each bag has a certain capacity.
More formally, for all $i \in [n]$, let $x_i \in \{0, 1\}$ denote whether item~$i$ is chosen for both bags (value~$1$) or for none (value~$0$).
In addition, for all $j \in \{1, 2\}$ let $p^{(j)}_i \in \R_{\geq 0}$ denote the profit of item~$i$ in bag~$j$, and let $\omega^{(j)}_i \in \R_{\geq 0}$ denote the weight of item~$i$ in bag~$j$.
Last, let $c^{(j)} \in \R_{\geq 0}$ denote the maximum capacity of bag~$j$.
The objective is then to maximize the function $f\colon \{0, 1\}^n \to \R_{\geq 0}^2$ given for all $j \in \{1, 2\}$ by $f_j\colon x \mapsto \sum\nolimits_{i \in [n]} p^{(j)}_i \cdot x_i$, subject to $\sum_{i \in [n]} \omega^{(j)}_i \cdot x_i \leq c^{(j)}$.

During our runs, we use~$f$ as objective function, and we only generate valid instances by applying a repair mechanism when creating new solutions, as detailed in the \hyperref[par:algorithmParameters]{\emph{Algorithm parameters} paragraph}.

\paragraph{Instances.}\label{par:instances}
We consider~$10$ random, independent instances for each value of $n \in \{250, 500, 750\}$.
For each instance, we generate the profits and weights independently by drawing a value from $[10 .. 100]$ uniformly at random for each profit and for each weight.
For each bag $j \in \{1, 2\}$, we set the capacity to half the total sum of the (random) weights, that is, $c^{(j)} = \lfloor \frac{1}{2} \sum_{i \in [n]} p^{(j)}_i \rfloor$.

\paragraph{Algorithm parameters.}\label{par:algorithmParameters}
For both the classic \moead and the \selfAdjustingMoead, we use $1$-point crossover, standard bit mutation, and a neighborhood size of $T = 10$.
Each algorithm has for each problem instance a budget of $100\,000$ FEs, after which it is stopped.
For the classic \moead, we choose a uniform weight vector, that is, for all $i \in [N]$, we choose $w_i = \frac{i - 1}{N - 1}$, and, based on preliminary tests, $N = 200$.
This is close to the parameters chosen by Zhang and Li~\cite{ZhangL07} for the classic \moead.

After mutation, we apply a \textbf{repair mechanism}, guaranteeing that the offspring $y \in \{0, 1\}^n$ satisfies the capacity constraints.
Let $\widehat{w} \in [0, 1]$ be the weight of the subproblem~$g_{\widehat{w}}$ for which~$y$ was generated, and let $z^* \in \R^2$ be the reference point.
In order to satisfy for all $j \in \{0, 1\}$ the constraint $\sum_{i \in [n]} \omega^{(j)}_i \cdot y_i \leq c^{(j)}$, we iteratively remove the item from~$y$ that has the best ratio of fitness loss to weight, that is, an item $i' \in [n]$ with $y_i = 1$ that minimizes $\bigl(g_{\widehat{w}}(y^{(i' \to 0)}, z^*) - g_{\widehat{w}}(y, z^*)\bigr) / (\omega^{(1)}_{i'} + \omega^{(2)}_{i'})$, where~$y^{(i' \to 0)}$ is identical to~$y$ except that $y^{(i' \to 0)}_{i'} = 0$.
We note that the repair mechanism makes use of additional FEs when evaluating~$g_{\widehat{w}}$, which are taken into account for the termination criterion.

\paragraph{Methodology.}\label{par:methodology}
We run the classic \moead and the \selfAdjustingMoead each~$5$ times independently per instance and per value of~$n$.
During our experiments, we log 1) the archive of each algorithm after being terminated for each run, and we log 2) the number of subproblems of the \selfAdjustingMoead during each iteration.
Afterward, we compare the archives of the classic and the \selfAdjustingMoead on the same instances, using the following \emph{set coverage} metric, defined as follows.
Given two finite archives $A, B \subseteq \R^2$, the set coverage~$C_A(B)$ of~$A$ and~$B$ is the percentage of solutions of~$B$ that are weakly dominated by~$A$, that is, $C_A(B) = 100 \cdot |\{u \in B \mid \exists v \in A\colon y \succeq x\}| / |B|$.


\paragraph{Code.}\label{par:code}
All of our experiments (excluding generating the data) runs in roughly~$90$ seconds on a computer with a $10$-core CPU, using a RAM of about $200$\,MB.
The data is generated in less than a second.
We use deterministic seeds for generating random numbers.
The code itself is available under \url{https://doi.org/10.5281/zenodo.23570076}.

\subsubsection{Results}
\label{sec:experiments:results}

\begin{table}[t]
  \caption{\label{tab:empiricalResults}
    The set coverage score for comparing the archives of the classic \moead ($c$) and of the \selfAdjustingMoead ($s$), as well as the archive size of either algorithm at the end of each run.
    We report the average numbers of all~$5$ runs for all~$10$ \hyperref[par:methodology]{instances} for all~$3$ values of~$n$, rounding numbers.
    The last column is the average of the averages reported in the~$10$ instances to the left, and it is computed using the original numbers and rounding afterward.
    We see that~$s$ weakly dominates far more points of~$c$ than the other way around.
    For all~$3$ values of~$n$, these results are statistically significant, according to a one-sided Mann--Whitney~$U$ test on the~$10$ averages per instance before rounding.
    The final archive sizes are roughly the same between both algorithms.
    Please also refer to \Cref{sec:experiments:results}.
    We report on the standard deviation of these values in \Cref{tab:standard-deviations}.
  }
  \begin{tabular}{rl*{11}{r}}
    $n$   & metric   & \multicolumn{10}{c}{instance} & avg                                                                                                                                                                    \\
    \toprule
    $250$ & $C_c(s)$ & \num{10.9602}                 & \num{11.5987}  & \num{06.4359}  & \num{15.2419} & \num{17.0742}  & \num{12.1732}  & \num{02.87912} & \num{16.0182} & \num{10.6713} & \num{16.8333}  & \num{11.4503}    \\
          & $C_s(c)$ & \num{59.958}                  & \num{64.3977}  & \num{79.4427}  & \num{55.5283} & \num{59.2649}  & \num{65.2207}  & \num{82.9415}  & \num{66.0128} & \num{78.1889} & \num{65.2154}  & \num{67.8839}    \\
    \cmidrule{2-13}
          & $|P_c|$  & \num{27.8}                    & \num{29.2}     & \num{30}       & \num{27.4}    & \num{34.2}     & \num{33.2}     & \num{32}       & \num{30.2}    & \num{23.6}    & \num{30.4}     & \num{29.8000000} \\
          & $|P_s|$  & \num{27.2}                    & \num{24}       & \num{27.8}     & \num{25.2}    & \num{26.4}     & \num{32.4}     & \num{26.6}     & \num{30.6}    & \num{23.8}    & \num{28.2}     & \num{27,2200000} \\
    \midrule
    $500$ & $C_c(s)$ & \num{06.43333}                & \num{06.10526} & \num{26.0417}  & \num{0}       & \num{05.12727} & \num{06.36364} & \num{18.303}   & \num{0}       & \num{16.6986} & \num{06.}      & \num{09.}        \\
          & $C_s(c)$ & \num{75.8579}                 & \num{68.4985}  & \num{61.2468}  & \num{90.2857} & \num{86.7435}  & \num{73.6613}  & \num{68.0849}  & \num{80.7163} & \num{69.7003} & \num{86.4569}  & \num{76.}        \\
    \cmidrule{2-13}
          & $|P_c|$  & \num{24.6}                    & \num{23.6}     & \num{29.6}     & \num{26}      & \num{22.4}     & \num{25}       & \num{21}       & \num{32.4}    & \num{24.8}    & \num{27.6}     & \num{25,7000000} \\
          & $|P_s|$  & \num{21}                      & \num{23.4}     & \num{21.6}     & \num{22}      & \num{22.6}     & \num{18.8}     & \num{19.6}     & \num{20}      & \num{19.6}    & \num{21.4}     & \num{21,0000000} \\
    \midrule
    $750$ & $C_c(s)$ & \num{17.7778}                 & \num{0}        & \num{06.66667} & \num{0}       & \num{13.75}    & \num{02.30769} & \num{13.6364}  & \num{10.8571} & \num{22.2051} & \num{01.05263} & \num{08.82534}   \\
          & $C_s(c)$ & \num{68.0357}                 & \num{89.6754}  & \num{83.4921}  & \num{90.101}  & \num{86.0504}  & \num{87.5556}  & \num{70.2778}  & \num{82.3663} & \num{72.4501} & \num{88.6365}  & \num{81.86409}   \\
    \cmidrule{2-13}
          & $|P_c|$  & \num{20.4}                    & \num{20}       & \num{20}       & \num{20}      & \num{14}       & \num{15}       & \num{22.4}     & \num{16.2}    & \num{20}      & \num{18.8}     & \num{18,6800000} \\
          & $|P_s|$  & \num{16.2}                    & \num{18.2}     & \num{15.4}     & \num{17.4}    & \num{16.2}     & \num{17.2}     & \num{14.2}     & \num{17.4}    & \num{14}      & \num{16.8}     & \num{16,3000000}
  \end{tabular}
\end{table}

\begin{table}[t]
  \caption{\label{tab:standard-deviations}
    The standard deviation of the values from \Cref{tab:empiricalResults}.
    Each value below is the average of the~$5$ standard deviations of the~$5$ runs per instance.
    We only report rounded values.
    We see that the average difference of the archive sizes seen in \Cref{tab:empiricalResults} is within one (average) standard deviation, which means that they are likely due to random variances.
  }
  \begin{tabular}{rl*{11}{r}}
    $n$   & metric   & \multicolumn{10}{c}{instance} & avg                                                                                                                                                                    \\
    \toprule
    $250$ & $C_c(s)$ & \num{13.6154}                 & \num{19.2868}  & \num{09.42125} & \num{13.603}   & \num{19.9318}  & \num{13.2427}  & \num{02.85241} & \num{17.6286} & \num{11.6288} & \num{10.7548}  & \num{13.19656}  \\
          & $C_s(c)$ & \num{19.19}                   & \num{23.6707}  & \num{17.2569}  & \num{17.6453}  & \num{21.505}   & \num{17.6737}  & \num{10.9134}  & \num{28.5729} & \num{16.7913} & \num{20.5977}  & \num{19.38169}  \\
    \cmidrule{2-13}
          & $|P_c|$  & \num{3.05941}                 & \num{2.31517}  & \num{5.21536}  & \num{5.004}    & \num{5.74108}  & \num{7.19444}  & \num{2.09762}  & \num{5.3066}  & \num{2.87054} & \num{5.004}    & \num{4.3808220} \\
          & $|P_s|$  & \num{6.93974}                 & \num{3.03315}  & \num{2.85657}  & \num{7.35935}  & \num{5.53534}  & \num{2.05913}  & \num{4.45421}  & \num{3.00666} & \num{3.81576} & \num{7.16659}  & \num{4.6226500} \\
    \midrule
    $500$ & $C_c(s)$ & \num{10.9034}                 & \num{08.05521} & \num{32.0996}  & \num{0}        & \num{06.30089} & \num{12.7273}  & \num{28.4019}  & \num{0}       & \num{31.1741} & \num{12.}      & \num{14.16624}  \\
          & $C_s(c)$ & \num{16.2787}                 & \num{19.4605}  & \num{27.5696}  & \num{08.00567} & \num{12.2303}  & \num{18.5534}  & \num{21.3893}  & \num{16.26}   & \num{34.5437} & \num{17.4816}  & \num{19.17728}  \\
    \cmidrule{2-13}
          & $|P_c|$  & \num{2.6533}                  & \num{6.85857}  & \num{8.77724}  & \num{5.09902}  & \num{3.87814}  & \num{2.52982}  & \num{5.09902}  & \num{2.05913} & \num{4.16653} & \num{4.58694}  & \num{4.5707710} \\
          & $|P_s|$  & \num{4.24264}                 & \num{2.57682}  & \num{5.88558}  & \num{6.3561}   & \num{2.05913}  & \num{3.96989}  & \num{7.03136}  & \num{2.68328} & \num{2.57682} & \num{2.33238}  & \num{3.9714000} \\
    \midrule
    $750$ & $C_c(s)$ & \num{27.9771}                 & \num{0}        & \num{13.3333}  & \num{0}        & \num{21.7945}  & \num{04.61538} & \num{19.4978}  & \num{15.5865} & \num{27.8266} & \num{02.10526} & \num{13.27364}  \\
          & $C_s(c)$ & \num{32.4902}                 & \num{09.83095} & \num{30.6904}  & \num{12.2285}  & \num{17.5435}  & \num{10.1932}  & \num{31.3852}  & \num{15.9158} & \num{27.2856} & \num{10.1191}  & \num{19.76825}  \\
    \cmidrule{2-13}
          & $|P_c|$  & \num{3.38231}                 & \num{3.40588}  & \num{2.60768}  & \num{4.93964}  & \num{5.65685}  & \num{1.89737}  & \num{2.33238}  & \num{5.3066}  & \num{6.0663}  & \num{3.37046}  & \num{3.8965470} \\
          & $|P_s|$  & \num{1.93907}                 & \num{3.54401}  & \num{3.07246}  & \num{3.44093}  & \num{2.56125}  & \num{5.1923}   & \num{5.74108}  & \num{4.63033} & \num{3.57771} & \num{3.76298}  & \num{3.7462120}
  \end{tabular}
\end{table}

We report two results.
First, we compare the archive of the \selfAdjustingMoead to that of the classic \moead (\Cref{tab:empiricalResults}).
Afterward, we analyze the number of subproblems of the \selfAdjustingMoead over time (\Cref{fig:number-of-subproblems}).

\paragraph{Comparing archives.}
\Cref{tab:empiricalResults} reports on the average of the~$5$ independent runs per instance and per value of~$n$.
In terms of the set coverage measure~$C$, that is, the relative number of individuals in one archive weakly dominating that of the other, we see that the archive of the \selfAdjustingMoead always weakly dominates far more points of the archive of the classic \moead than the other way around.
This difference is statistically significant for a $p$-value of~$0.01$ for all three values of~$n$.
This suggests that the \selfAdjustingMoead generally finds better solutions in terms of weak domination.

Moreover, the table reports on the average size of the final archive per instance.
Here, we see that the classic \moead has a slightly larger archive on average than the \selfAdjustingMoead, but the sizes are often comparable between both algorithms.
The difference is most likely a result of the random creation of the offspring, as the differences are within one standard deviation of the data (reported in \Cref{tab:standard-deviations}).
In combination with the set coverage above, these results suggest that the \selfAdjustingMoead outperforms the classic \moead in parts of the search space that they both explore.

\paragraph{Number of subproblems.}
\Cref{fig:number-of-subproblems} shows how the number of subproblems of the \selfAdjustingMoead evolves over time.
Although the figure only shows two examples, we note that the figures for all instances follow the same qualitative trend.
Initially, the \selfAdjustingMoead operates with not so many subproblems.
This means that many solutions weakly dominate each other, not requiring more subproblems.
This strategy works well until it gets harder to find improving solutions, most likely because the Pareto fron (or local optima) are found.
From this point on, the number of subproblems grows until progress is hard to make and the algorithm runs out of budget of FEs.

This behavior is good as it allows the algorithm to only use up few FEs early on and to use most of its budget once it gets closer to the Pareto front.
This allows it to explore the Pareto front more thoroughly, which is potentially why we see in \Cref{tab:empiricalResults} that the archive of the \selfAdjustingMoead dominates that of the classic \moead to a large degree.
In contrast, the classic \moead uses the same larger number of subproblems, namely $N = 200$, in each iteration, providing it with fewer opportunities to explore the Pareto front once it gets there.


\begin{figure}
  \begin{subfigure}[b]{0.48\textwidth}
    \includegraphics[width = \textwidth]{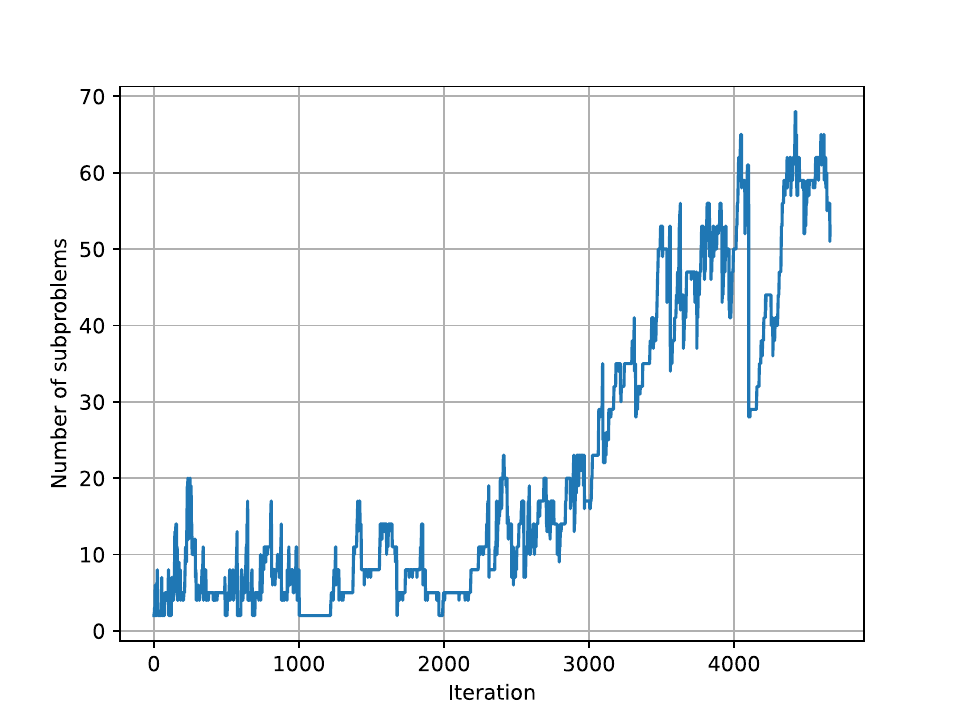}
    \caption{Instance~$0$ of size $n = 500$.}
  \end{subfigure}
  \begin{subfigure}[b]{0.48\textwidth}
    \includegraphics[width = \textwidth]{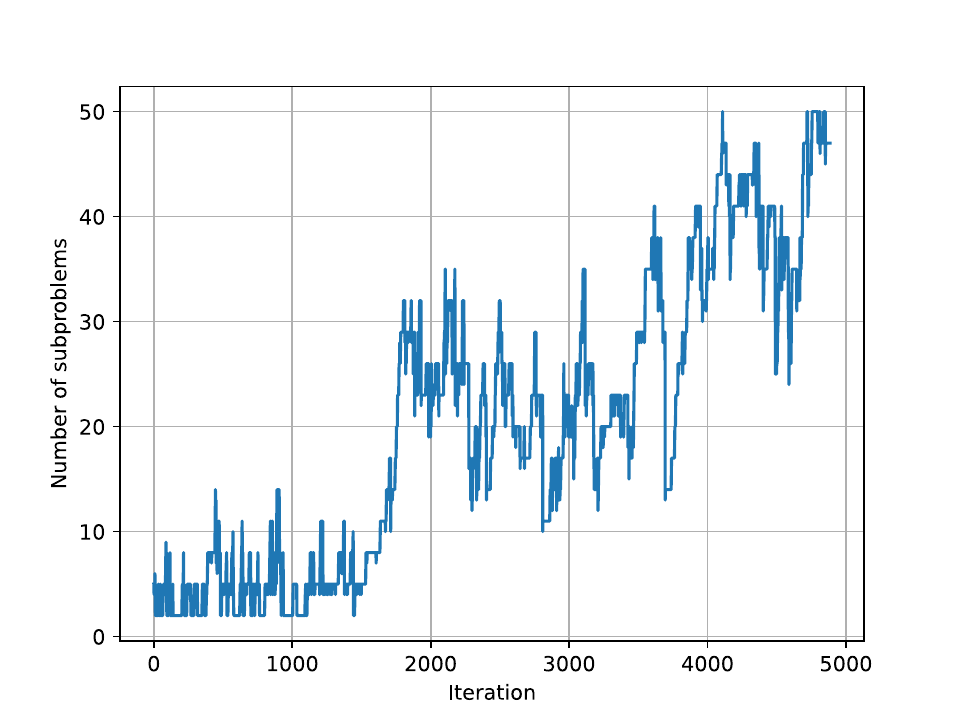}
    \caption{Instance~$0$ of size $n = 750$.}
  \end{subfigure}
  \caption{\label{fig:number-of-subproblems}
    The number of subproblems per iteration of the \selfAdjustingMoead on two random instances of the stated size.
    The number is initially small, resulting only in a few FEs, and it grows in later iterations, allowing the algorithm to explore the Pareto front more thoroughly.
    Please also refer to \Cref{sec:experiments:results}.
  }
\end{figure}

\paragraph{Conclusion.}
The \selfAdjustingMoead competes with or even outperforms a good parametrization of the classic \moead for a problem class on which the classic \moead outperformed another common multi-objective search heuristic (the GSEMO)~\cite{ZhangL07}.
The \selfAdjustingMoead does so while requiring almost no parameter choices of the user (just the neighborhood size~$T$, coupled to the choice of the crossover operator).
For parts of the Pareto front that both algorithms explore, the \selfAdjustingMoead outperforms the classic \moead.

\Cref{tab:standard-deviations} reports the standard deviation of the values from \Cref{tab:empiricalResults}.
Below, we go into detail about the additional experiments we ran.

\subsection{Performance on \monotoneLotz}
\label{sec:lotz-experiments}

We run the classic \moead and the \selfAdjustingMoead in the same setting as mentioned in \Cref{sec:algorithms:self-adjusting-moead} on random \monotoneLotz instances of size $n = 300$.
For the classic \moead, we choose $N = 4n$.
We create both $u, v\colon [0 .. n] \to [0, 1]$ by choosing $u(0) = v(0) = 0$, $u(1) = v(1) = 1$, and $n - 1$ independent random numbers from~$[0, 1]$ for either function and then sort them ascendingly.
The resulting sorted ranges of length $n + 1$ represent the ranges of~$u$ and~$v$, respectively.

\paragraph{Results.}
We run both algorithms on the same~$5$ random instances.

We observe that the \selfAdjustingMoead always covers the entire Pareto front within reasonable time, that is, within about $4$~minutes (using the same computational resources as mentioned in \hyperref[par:code]{the \emph{Code} paragraph} of \Cref{sec:experiments:setup}).
In contrast, the classic \moead does not cover the entire Pareto front even after $5$~hours of execution.

For either algorithm, we log how quickly it finds solutions on the Pareto front.
Both algorithms quickly find first solutions on the Pareto front and are then slowed down by finding the remaining solutions.
The \selfAdjustingMoead finds all solutions eventually but does so one by one.
Its number of subproblems grows steadily and proportionally to the size of the Pareto front covered.
The classic \moead finds the solutions to the subproblems quickly but does not manage to find all of the remaining optima within reasonable time.

\end{document}